\documentclass{article}

\usepackage{algorithm}
\usepackage{algpseudocode}
\usepackage{amsmath}
\usepackage{amssymb}
\usepackage{amsthm}
\usepackage{bigints}
\usepackage{booktabs}
\usepackage{caption}
\usepackage{graphicx}
\usepackage{mathtools}
\usepackage{multirow}
\usepackage{todonotes}
\usepackage{relsize}
\usepackage{subcaption}
\usepackage{svg}
\usepackage{enumitem}
\usepackage[labelfont=bf,font=small]{caption}

\captionsetup[table]{skip=10pt}

\usepackage[final]{corl_2022} %
\usepackage{cleveref}

\newtheorem{theorem}{Theorem}[section]

\newtheorem{lemma}[theorem]{Lemma}

\title{Particle-Based Score Estimation for State Space Model Learning in Autonomous Driving}

\author{
  Angad Singh$^*$\\
  Waymo Research\\
  \texttt{singhangad@waymo.com} \\
  \And
  Omar Makhlouf$^*$\\
  Waymo Research\\
  \texttt{makhlouf@waymo.com} \\
  \And
  Maximilian Igl\\
  Waymo Research\\
  \texttt{migl@waymo.com} \\
  \And
  Joao Messias\\
  Waymo Research\\
  \texttt{messiasj@waymo.com} \\
  \And
  Arnaud Doucet\\
  DeepMind\\
  \texttt{arnauddoucet@deepmind.com}
  \And
  Shimon Whiteson\\
  Waymo Research\\
  \texttt{shimonw@waymo.com} \\
}

\begin{document}
\maketitle
\def\thefootnote{*}\footnotetext{These authors contributed equally to this work.}\def\thefootnote{\arabic{footnote}}

\begin{abstract}
    Multi-object state estimation is a fundamental problem for robotic applications where a robot must interact with other moving objects. Typically, other objects' relevant state features are not directly observable, and must instead be inferred from observations. Particle filtering can perform such inference given approximate transition and observation models. However, these models are often unknown a priori, yielding a difficult parameter estimation problem since observations jointly carry transition and observation noise. In this work, we consider learning maximum-likelihood parameters using particle methods. Recent methods addressing this problem typically differentiate through time in a particle filter, which requires workarounds to the non-differentiable resampling step, that yield biased or high variance gradient estimates. By contrast, we exploit Fisher's identity to obtain a particle-based approximation of the score function (the gradient of the log likelihood) that yields a low variance estimate while only requiring stepwise differentiation through the transition and observation models. We apply our method to real data collected from autonomous vehicles (AVs) and show that it learns better models than existing techniques and is more stable in training, yielding an effective smoother for tracking the trajectories of vehicles around an AV.
\end{abstract}

\keywords{Autonomous Driving, Particle Filtering, Self-supervised Learning}

\section{Introduction}\label{sec:intro}

	Multi-object state estimation is a fundamental problem in settings where a robot must interact with other moving objects, since their state is directly relevant for decision making. Typically, other objects' relevant state features are not directly observable. Instead, the robot must infer them from a stream of observations it receives via a perception system. For example, an autonomous vehicle (AV) selects actions based on the state of nearby road users. However, such road users are only partially observed, owing to limited field of view, occlusions, and imperfections in the AV's sensors and perception systems. Such partial observability negatively affects many downstream tasks in a robot's behavioural stack that depend on observations, e.g., action planning. %
	
	Addressing partial observability requires sequential state estimation, to which Bayesian filtering offers a generic probabilistic approach. In particular, sequential Monte Carlo methods, also known as particle filtering, have been successfully applied to state estimation in many robotics applications \citep{thrun2005probabilistic}. However, Bayesian filters require models that reasonably approximate the transition and observation models of a state-space model (SSM). In some special cases, these models can be derived analytically from first principles, e.g., when the physical dynamics are well understood, or by modeling a sensor's physical characteristics. In many real-world applications, however, these models cannot be specified analytically. For example, the transition model may encode complicated motion dynamics and environmental physics. In multi-agent settings, other agents' behaviour must also be modelled. Modelling observations is also difficult. Modern perception systems often involve multiple stages and combine information from multiple sensors, making observation models practically impossible to specify by hand. By contrast, collecting observations from a robotic system is relatively easy and cheap. We are interested, therefore, in algorithms that can leverage such observations to learn transition and observation models in a self-supervised fashion, and yield an effective particle smoother. Learned transition and observation models can also be independently useful for other applications, such as the evaluation of AVs by simulating realistic observations.
	
    In this work, we propose Particle Filtering-Based Score Estimation using Fisher's Identity (PF-SEFI), a method for jointly learning maximum-likelihood parameters of both the transition and observation models, that is applicable to a wide class of SSMs. Unlike many recently proposed methods \citep{anh2018autoencoding, maddison2017filtering, naesseth2018variational, karkus2018particle,corenflos2021differentiable,lai2022variational}, our approach avoids differentiable approximations of the resampling step. We achieve this by revisiting a methodology originally proposed in statistics \citep{poyiadjis2011particle,kantas2015particle} that relies on a particle approximation of the score, i.e., the gradient of the log likelihood of observation sequences, obtained through Fisher's identity. This only requires differentiating through the transition and observation models. Unfortunately, a direct particle approximation of this identity provides a high variance estimate of the score. While \citep{poyiadjis2011particle} propose an alternative low variance estimate, it admits a $\mathcal{O}(N^2)$ cost, where $N$ is the number of particles. Furthermore, these methods compute and store the gradient of the marginal log-likelihood with respect to model parameters for each particle. This requires computing Jacobian matrices, which are slow to compute using automatic differentiation tools such as TensorFlow and PyTorch \citep{tensorflow2015-whitepaper,pytorch_neurips2019} which rely on Jacobian-vector products. This makes these methods impractical for large models. By contrast, PF-SEFI is a simple scalable $\mathcal{O}(N)$ variant with only negligible bias. PF-SEFI marginalises over particles before computing gradients, allowing automatic differentiation tools to make use of efficient Jacobian-vector product operations, making it significantly faster and allowing us to scale to larger models. To the best of our knowledge, previous particle methods estimating the score have been limited to SSMs with few parameters, whereas we apply PF-SEFI to neural network models with thousands of parameters.
    
    We apply PF-SEFI to jointly learn transition and observation models for tracking multiple objects around an AV, using a large set of noisy trajectories, containing almost 10 hours of road-user trajectories observed by an AV. We show that PF-SEFI learns an SSM that yields an effective object tracker as measured by average displacement and yaw errors. We compare the learned observation model to one trained through supervised learning on a dataset of manually labelled trajectories, and show that PF-SEFI yields a better model (as measured by log-likelihood on ground-truth labels) even though it requires no labels for training. Finally, we compare PF-SEFI to a number of existing particle methods for jointly learning transition and observation models and show that it learns better models and is more stable to train.

\section{Related Work}\label{sec:related-work}

    Particle filters are widely used for state estimation in non-linear non-Gaussian SSMs where no closed form solution is available; see e.g., \citep{doucet2009tutorial} for a survey. The original bootstrap particle filter \citep{gordon1993novel} samples at each time step using the transition density particles that are then reweighted according to their conditional likelihood, which measures their ``fitness'' w.r.t.\ to the available observation. Particles with low weights are then eliminated while particles with high weights are replicated to focus computational efforts into regions of high probability mass. 
    Compared to many newer methods, such as the auxiliary particle filter \citep{pitt1999filtering}, the bootstrap particle filter only requires sampling from the transition density, not its evaluation at arbitrary values, which is not possible for the compositional transition density used in this work.

    In most practical applications, the SSM has unknown parameters that must be estimated together with the latent state posterior (see \citep{kantas2015particle} for a review). Simply extending the latent space to include the unknown parameters suffers from insufficient parameter space exploration \citep{ionides2006inference}. 
    While particle filters can estimate consistently the likelihood for fixed model parameters, a core challenge is that the such estimated likelihood function is discontinuous in the model parameters due to the resampling step, hence complicating its optimization; see e.g.\ \cite[Figure 1]{corenflos2021differentiable} for an illustration. 

    Instead, the score vector can be computed using Fisher's identity \citep{poyiadjis2011particle}.
    However, as shown in \citep{poyiadjis2011particle}, performance degrades quickly for longer sequences if a standard particle filter is used, due to the path degeneracy problem: 
    repeated resampling of particles and their ancestors will leave few or even just one remaining ancestor path for earlier timesteps, resulting in unbiased, but very high variance estimates.
    Methods for overcoming this limitation exist \citep{poyiadjis2011particle,olsson2017efficient,scibior2021differentiable}, but with requirements making them unsuitable in this work.
    \citet{poyiadjis2011particle} store gradients separately for each particle, making this approach infeasible for all but the smallest neural networks.
    \citet{scibior2021differentiable} propose an improved implementation with lower memory requirements by smartly using automatic differentiation. However, their approach still requires storing a computation graph whose size scales with $\mathcal{O}(N^2)$ as the transition density for each particle \emph{pair} must be evaluated during the forward pass.
    Both previous methods' computational complexity also scales quadratically with the number of particles, $N$, which is problematic for costly gradient backpropagation through large neural networks.
    Lastly, \citet{olsson2017efficient} require evaluation of the transition density for arbitrary values, which our compositional transition model does not allow.
    Instead, in this work, we show that fixed-lag smoothing \citep{kitagawa2001monte,olsson2008sequential} is a viable alternative to compute the score function of large neural network models in the context of extended object tracking.

    There is extensive literature on combining particle filters with learning complex models such as neural networks \citep{anh2018autoencoding,maddison2017filtering,naesseth2018variational,karkus2018particle,corenflos2021differentiable,ma2019particle,ma2020discriminative,jonschkowski2018differentiable,zhu2020towards,kloss2020train}.
    In contrast to our work, they make use of a learned, data-dependent proposal distribution. However, for parameter estimation, they rely on differentiation of an estimated lower bound (ELBO).
    Due to the non-differentiable resampling step, this gradient estimation has either extremely high variance or is biased if the high variance terms are simply dropped, as in  \citep{anh2018autoencoding,maddison2017filtering,naesseth2018variational}.
    As we show in \Cref{sec:experiments}, this degrades performance noticeably.
    A second line of work proposes soft resampling \citep{karkus2018particle,ma2019particle,ma2020discriminative}, which interpolates between regular and uniform sampling, thereby allowing to trade off variance reduction through resampling with the bias introduced by ignoring the non-differentiable component of resampling.
    Lastly, \citet{corenflos2021differentiable} make the resampling step differentiable by using entropy-regularized optimal transport, also inducing bias and a $\mathcal{O}(N^2)$ cost.

    Extended object tracking \citep{granstrom2016extended} considers how to track objects which, in contrast to ``small'' objects \citep{blackman_design_1999}, generate multiple sensor measurements per timestep. Unlike in our work, transition and measurement models are assumed to be known or to depend on only a few learnable parameters. 
    Similar to our work, the measurement model proposed in \citep{granstrom2014multiple} assumes measurement sources lying on a rectangular shape. However, our model is more flexible, for example, allowing non-zero probability on all four sides simultaneously.

\section{State-Space Models and Particle Filtering}\label{sec:background}
    \subsection{State-Space Models}
        A SSM is a partially observed discrete-time Markov process with initial density, $x_0 \sim \mu(\cdot)$, transition density $x_t |x_{t-1} \sim f_\theta(\cdot | x_{t-1})$, and observation density $y_t | x_t \sim g_\theta(\cdot | x_t)$, where $x_t$ is the latent state at time $t$ and $y_t$ the corresponding observation. The joint density of $x_{0:T}, y_{0:T}$ satisfies:
        \begin{equation}
            \label{eq:ssm}
            p_\theta(x_{0:T}, y_{0:T}) = \mu(x_0) g_\theta(y_0 | x_0) \prod_{t=1}^T f_\theta(x_t | x_{t-1}) g_\theta(y_t | x_t).
        \end{equation}
        Given this model, we are typically interested in inferring the states from the data by computing the filtering and one-step ahead prediction distributions, $\{p(x_t | y_{0:t})\}_{t\in0,...,T}$ and  $\{p(x_{t+1} | y_{0:t})\}_{t\in0,...,T-1}$ respectively, and more generally the joint distributions $\{p(x_{0:t} | y_{0:t})\}_{t\in0,...,T}$ satisfying 
        \begin{equation}
            p_{\theta}(x_{0:t}|y_{0:t})=\frac{p_\theta(x_{0:t}, y_{0:t})}{p_{\theta}(y_{0:t})},\qquad p_\theta(y_{0:T})= \int p_\theta(x_{0:T}, y_{0:T}) \mathrm{d}x_{0:T}.
        \end{equation}
        Additionally, to estimate parameters, we would also like to compute the marginal log likelihood:
        \begin{equation}  \label{eq:mll}
          \ell_T(\theta)= \log p_\theta(y_{0:T})= \log p_\theta(y_0)+\sum_{t=1}^T \log p_{\theta}(y_t|y_{0:t-1}),
        \end{equation}
        where $p_{\theta}(y_0)=\int g_\theta(y_0|x_0)\mu(x_0)\mathrm{d}x_0$ and $p_{\theta}(y_t|y_{0:t-1})=\int g_{\theta}(y_t|x_t) p_{\theta}
            (x_t|y_{0:t-1})\mathrm{d} x_t$ for $t\geq 1$.
        
        For non-linear non-Gaussian SSMs, these posterior distributions and the corresponding marginal likelihood cannot be computed in closed form.

    \subsection{Particle Filtering}\label{subsec:particlefiltering}
        Particle methods provide non-parametric and consistent approximations of these quantities. They rely on the combination of importance sampling and resampling steps of a set of $N$ weighted particles $(x^i_t,w^i_t)$, where $x^i_t$ denotes the values of the $i^{\textup{th}}$ particle at time $t$ and $w^i_t$ is corresponding weight satisfying $\sum_{i=1}^N w^i_t=1$.
        We focus on the bootstrap particle filter, shown in Algorithm \ref{alg:bootstrapPF}, which samples particles according to the transition density.
        \begin{algorithm}
            \caption{Bootstrap Particle Filter \label{alg:bootstrapPF}}
            
            \textsf{Sample $X_{0}^{i}\stackrel{\text{i.i.d.}}{\sim} \mu(\cdot)$~for $i \in [N]$ and set $\hat{\ell}_0(\theta) \leftarrow \log \left(\frac{1}{N}\sum_{i=1}^{N}g_\theta(y_0|x^i_0)\right)$.}
            
            \textsf{For $t=1,...,T$}
            \begin{enumerate}
                \item \textsf{Compute weights $w^i_{t-1}\propto g_\theta(y_{t-1}|x^i_{t-1})$ with $\sum_{i=1}^N  w^i_{t-1}=1$.}
                \item \textsf{Sample $a^i_{t-1} \sim \textup{Cat}(w^1_{t-1},...,w^N_{t-1})$ then $x_t^i\sim f_\theta(\cdot|x^{a^i_{t-1}}_{t-1})$ for $i \in [N]$.}
                \item \textsf{Set $x^i_{0:t} \leftarrow (x^{a^i_{t-1}}_{0:t-1},x^i_t)$ for $i \in [N]$ and $\hat{\ell}_t(\theta) \leftarrow \hat{\ell}_{t-1}(\theta)+ \log \left( \frac{1}{N}\sum_{i=1}^{N}g_\theta(y_t|x^i_t) \right)$.}
            \end{enumerate}
        \end{algorithm}
        Let $k \sim \textup{Cat}(\alpha_1,...,\alpha_N)$ denote the categorical distribution with $N$ categories, where the probability of the $k$ taking the $i^{\mathrm{th}}$ category is $\alpha_i$. At any time $t$, this algorithm produces particle approximations 
        \begin{equation}
            \hat{p}_{\theta}(x_{0:t}|y_{0:t})=\sum_{i=1}^N w_t^i \delta_{x^i_{0:t}}(x_{0:t}),\qquad \hat{\ell}_t(\theta)=\sum_{t=0}^T \log \left( \frac{1}{N}\sum_{i=1}^{N}g_\theta\left(y_t|x^i_t\right) \right),
        \end{equation}
        of $p_{\theta}(x_{0:t}|y_{0:t})$ and $\ell_t(\theta)=\log p_\theta(y_{0:t})$, where $\delta_{\alpha}$ is the Dirac delta distribution centred at $\alpha$. Step 2 resamples, discarding particles with small weights while replicating those with large weights before evolving according to the transition density. This focuses computational effort on the ``promising'' regions of the state space. Unfortunately, resampling involves sampling $N$ discrete random variables at each time step and as such produces estimates of the log likelihood that are not differentiable w.r.t.\ $\theta$ as illustrated in \cite[Figure 1]{corenflos2021differentiable}.
        
        While the resulting estimates are consistent as $N \rightarrow \infty$ for any fixed time $t$ \citep{delmoral2004}, this does not guarantee good practical performance. Fortunately, under regularity conditions the approximation error for the estimate $\hat{p}_\theta(x_t|y_{0:t})$ and more generally $\hat{p}_\theta(x_{t-L+1:t}|y_{0:t})$ for a fixed lag $L\geq 1$ as well as $\log p_\theta(y_{0:t})/t$ does not increase with $t$ for fixed $N$. However, this is not the case for the joint smoothing approximation because successive resampling means that $\hat{p}_{\theta}(x_{0:L}|y_{0:t})$ is eventually approximated by a single unique particle for large enough $t$, a phenomenon known as path degeneracy; see e.g.\ \cite[Section 4.3]{doucet2009tutorial}.

\vspace{-0.5em}
\section{Score Estimation using Particle Methods}\label{sec:method}
\vspace{-0.5em}
    To estimate the parameters $\theta$ of a given SSM \eqref{eq:ssm} along with a dataset of observations $y_{0:T}$, we want to maximise via gradient ascent the marginal log likelihood in \eqref{eq:mll}. However, the gradient of the marginal log likelihood, i.e., the \emph{score function}, is intractable. As explained in Section \ref{sec:related-work}, automatic differentiation through the filter is difficult due to the non-differentiable resampling step.
    \vspace{-0.5em}
    \subsection{Score Function Using Fisher's Identity}
    \vspace{-0.5em}
       We leverage here instead Fisher's identity \citep{poyiadjis2011particle} for the score to completely side-step the non-differentiability problem. This identity shows that
        \begin{eqnarray}
            \label{eq:fishers-identity}
            \nabla_{\theta} \ell_T(\theta) = \int \nabla_{\theta} \log{p_{\theta}(x_{0:T}, y_{0:T})} ~p_{\theta}(x_{0:T} | y_{0:T})\mathrm{d}x_{0:T},
        \end{eqnarray}
        i.e., the score is the expectation of $\nabla_{\theta}\log{p_{\theta}(x_{0:T}, y_{0:T})}$ under the joint smoothing distribution $p_{\theta}(x_{0:T} | y_{0:T})$.  Plugging in \eqref{eq:ssm}, the score function can be simplified to
        \begin{eqnarray}
            \label{eq:score-function}
            \nabla_\theta \ell_T(\theta) &= &\sum_{t=0}^{T} \int \nabla_\theta \log{g_\theta(y_t | x_t)} ~p_\theta(x_{t} | y_{0:T})\mathrm{d}x_t \nonumber\\
            &  &+\sum_{t=1}^{T} \int \nabla_\theta \log{f_\theta(x_t | x_{t-1})}  ~p_\theta(x_{t-1:t} | y_{0:T})\mathrm{d}x_{t-1:t}.
        \end{eqnarray}
        
    \subsection{Particle Score Approximation}\label{subsec:particleapproximation}
        The identity (\ref{eq:score-function}) shows that we can simply estimate the score by plugging particle approximations of the marginal smoothing distributions $p(x_{t-1:t}|y_{0:T})$ into (\ref{eq:score-function}). This identity makes differentiating through time superfluous and thereby renders the use of differentiable approximations of resampling unnecessary. However, as discussed in Section \ref{subsec:particlefiltering}, naive particle approximations of the smoothing distribution's marginals, $p_{\theta}(x_t | y_{0:T})$ and $p_{\theta}(x_{t-1:t}| y_{0:T})$, suffer from path degeneracy. To bypass this problem, \citep{poyiadjis2011particle,scibior2021differentiable} propose an $\mathcal{O}(N^2)$ method inspired by dynamic programming. We propose here a simpler and computationally cheaper method that relies on the following fixed-lag approximation of the fixed-interval smoothing distribution, which states that for $L\geq 1$ large enough,
        \begin{equation}
            p_\theta(x_{t-1:t}|y_{0:T}) \approx p_\theta\left(x_{t-1:t}|y_{0:\textup{min}\{t+L,T\}}\right).
        \end{equation}
        This approximation simply assumes that observations after time $t+L$ do not bring further information about the states $x_{t-1},x_t$. This is satisfied for most models and the resulting approximation error decreases geometrically fast with $L$ \citep{olsson2008sequential}. The benefit of this approximation is that the particle approximation of $p_\theta\left(x_{t-1:t}|y_{0:\textup{min}\{t+L,T\}}\right)$ does not suffer from path degeneracy and is a simple byproduct of the bootstrap particle filtering of Algorithm \ref{alg:bootstrapPF}; e.g., for $t+L<T$ we consider the particle approximation $\hat{p}_\theta(x_{0:t+L}|y_{0:t+L})=\sum_{i=1}^N w^i_{t+L} \delta_{x^i_{0:t+L}}(x^i_{0:t+L})$ obtained at time $t+L$ and use its corresponding marginals in $x_{t-1},x_t$ and $x_t$ to integrate respectively $\nabla_\theta \log{f_\theta(x_t | x_{t-1})}$ and $\nabla_\theta \log{g_\theta(y_t | x_t)}$. For $t+L\geq T$, we just consider the marginals in $x_{t-1},x_t$ and $x_t$ of $\hat{p}_\theta(x_{0:T}|y_{0:T})$. So finally, we consider the estimate, 
        \begin{eqnarray}
            \label{eq:estimate-score-function}
            \widehat{\nabla_\theta \ell_T}(\theta) &= &\sum_{t=0}^{T} \int \nabla_\theta \log{g_\theta(y_t | x_t)} ~\hat{p}_\theta(x_{t} | y_{0:\textup{min}\{t+L,T\}})\mathrm{d}x_t \nonumber\\
            &  &+\sum_{t=1}^{T} \int \nabla_\theta \log{f_\theta(x_t | x_{t-1})}  ~\hat{p}_\theta(x_{t-1:t} |y_{0:\textup{min}\{t+L,T\}})\mathrm{d}x_{t-1:t}.
        \end{eqnarray}
    
    \subsection{Score Estimation with Deterministic, Differentiable, Injective Motion Models}
        We have described a generic method to approximate the score using particle filtering techniques. For many applications, however, the transition density function, $f_\theta(x_t | x_{t-1})$, is the composition of a \textit{policy}, $\pi_\theta(a_t | x_{t-1})$, which characterises the action distribution conditioned on the state, and a potentially complex but deterministic, differentiable, and injective \textit{motion model}, $\tau: \mathbb{R}^{n_x} \times \mathbb{R}^{n_a} \rightarrow \mathbb{R}^{n_x}$ where $n_a<n_x$, which characterises kinematic constraints such that $x_t = \tau(x_{t-1}, a_t)=\bar{\tau}_{x_{t-1}}(a_t)$. Under such a composition, the transition density function on the induced manifold $\mathcal{M}_{x_{t-1}}=\{\bar{\tau}_{x_{t-1}}(a_t):a_t \in \mathbb{R}^{n_a}\}$ is thus obtained by marginalising out the latent action variable, i.e.,
        \begin{equation}\label{eq:transcompo}
            f_\theta(x_t | x_{t-1}) = \mathbb{I}(x_t \in \mathcal{M}_{x_{t-1}})\int \delta(x_t - \bar{\tau}_{x_{t-1}}(a_t)) ~\pi_\theta(a_t | x_{t-1}) \mathrm{d}a_t.
        \end{equation}
        It is easy to sample from this density but it is intractable analytically if the motion model is only available through a complex simulator or if it is not invertible. This precludes the use of sophisticated proposal distributions within the particle filter. Additionally, even if it were known, one cannot use the $\mathcal{O}(N^2)$ smoothing type algorithms developed in \citep{poyiadjis2011particle,olsson2017efficient} as the density is concentrated on a low-dimensional manifold \citep{lindsten2015particle}. This setting is common in mobile robotics, in which controllers factor into policies that select actions and motion models that determine the next state. Indeed, this is precisely the case in our application (see Section \ref{sec:experiments}). Learning the corresponding SSM reduces to learning the parameters $\theta$ of the policy, $\pi_{\theta}(a_t | x_{t-1})$, and the observation model, $g_{\theta}(y_t | x_t)$. Thankfully, even if the explicit form of the motion model is unknown, we can still compute $\nabla \log f_\theta(x_t|x_{t-1})$ as required by the score estimate (\ref{eq:estimate-score-function}).
        \begin{lemma} \label{lemma:transition-policy-equivalence}
            For any $x \in \mathbb{R}^{n_x}$, let $\tau_x: \mathbb{R}^{n_a} \rightarrow \mathbb{R}^{n_x}$ where $n_a<n_x $ be a smooth and injective mapping. Then, for any fixed $x_{t-1}$ and $x_t \in \mathcal{M}_{x_{t-1}}$, the gradient of the transition log density, i.e., $\nabla_\theta \log{f_{\theta}(x_t | x_{t-1})}$, reduces to the gradient of the policy log density, i.e., $\nabla_\theta \log{\pi_{\theta}\left(a_t | x_{t-1}\right)}$, where $a_t$ is the unique action that takes $x_{t-1}$ to $x_t$.
        \end{lemma}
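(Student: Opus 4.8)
The plan is to show that, on the manifold $\mathcal{M}_{x_{t-1}}$, the transition density factors as the policy density evaluated at the corresponding action times a Jacobian factor that does not involve $\theta$, so that this factor disappears upon taking $\nabla_\theta \log$. First I would make the conditional density in \eqref{eq:transcompo} precise: since $n_a < n_x$, the law of $x_t$ given $x_{t-1}$ is supported on the $n_a$-dimensional embedded submanifold $\mathcal{M}_{x_{t-1}} = \bar{\tau}_{x_{t-1}}(\mathbb{R}^{n_a})$, and $f_\theta(\cdot \mid x_{t-1})$ must be read as a density with respect to the $n_a$-dimensional Hausdorff measure on $\mathcal{M}_{x_{t-1}}$. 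Applying the change-of-variables (area) formula to the push-forward of $\pi_\theta(\cdot \mid x_{t-1})$ through the smooth injective map $\bar{\tau}_{x_{t-1}}$ then gives, for $x_t \in \mathcal{M}_{x_{t-1}}$,
\begin{equation}
  f_\theta(x_t \mid x_{t-1}) = \pi_\theta\big(a_t \mid x_{t-1}\big)\,\big(\det(J^{\top}J)\big)^{-1/2}, \qquad a_t = \bar{\tau}_{x_{t-1}}^{-1}(x_t), \qquad J = \nabla_a \bar{\tau}_{x_{t-1}}(a_t) \in \mathbb{R}^{n_x \times n_a},
\end{equation}
where injectivity of $\bar{\tau}_{x_{t-1}}$ is exactly what makes $a_t$ the ``unique action that takes $x_{t-1}$ to $x_t$'' appearing in the statement.

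To justify this identity rigorously I would interpret the formal integral $\int \delta(x_t - \bar{\tau}_{x_{t-1}}(a_t))\,\pi_\theta(a_t \mid x_{t-1})\,\mathrm{d}a_t$ via the coarea formula, or equivalently by a local computation: take $\bar{\tau}_{x_{t-1}}$ as a chart on $\mathcal{M}_{x_{t-1}}$, extend it to a tubular-neighbourhood chart of $\mathbb{R}^{n_x}$ by adjoining $n_x - n_a$ transverse coordinates, integrate the $n_x$-dimensional delta against those transverse coordinates, and read off the induced density in the remaining $n_a$ coordinates. The one regularity point to flag is that this needs $J$ to have full column rank $n_a$ — i.e.\ $\bar{\tau}_{x_{t-1}}$ to be an immersion, so that $\det(J^{\top}J) > 0$ and, together with injectivity, $\bar{\tau}_{x_{t-1}}$ is an embedding onto $\mathcal{M}_{x_{t-1}}$; this is the natural regularity assumption on a kinematic motion model, and I expect it to be the only genuinely technical ingredient.

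Given the factorization the conclusion follows in one line. Taking logarithms,
\begin{equation}
  \log f_\theta(x_t \mid x_{t-1}) = \log \pi_\theta\big(a_t \mid x_{t-1}\big) - \tfrac{1}{2}\log \det(J^{\top}J),
\end{equation}
and because the motion model $\tau$ is fixed and does not depend on $\theta$, neither the action $a_t = \bar{\tau}_{x_{t-1}}^{-1}(x_t)$ nor the Jacobian factor $\det(J^{\top}J)$ depends on $\theta$. Hence differentiating in $\theta$ annihilates the second term, and since $a_t$ is a $\theta$-independent constant the chain rule gives $\nabla_\theta \log f_\theta(x_t \mid x_{t-1}) = \nabla_\theta \log \pi_\theta(a_t \mid x_{t-1})$, which is the claim. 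The remaining work is purely the bookkeeping behind the Hausdorff-measure interpretation of $f_\theta$ and the area formula; once that change-of-variables identity is in place, the differentiation step is immediate.
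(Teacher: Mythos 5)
Your proposal is correct and follows essentially the same route as the paper: both establish the factorization $f_\theta(x_t \mid x_{t-1}) = \pi_\theta(a_t \mid x_{t-1})\,\bigl|\det\bigl(J^{\top}J\bigr)\bigr|^{-1/2}$ on $\mathcal{M}_{x_{t-1}}$ (the paper by citing the standard rectangular change-of-variables result, you by sketching its derivation via the area/coarea formula) and then observe that the Jacobian factor and $a_t$ are $\theta$-independent, so $\nabla_\theta \log$ kills everything but the policy term. Your explicit flagging of the full-column-rank (immersion) condition needed for $\det\bigl(J^{\top}J\bigr) > 0$ is a reasonable extra precision that the paper leaves implicit in ``smooth and injective.''
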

        \begin{proof}
            For $x_{t-1}$ and $x_t \in \mathcal{M}_{x_{t-1}}$, we denote by $J[\bar{\tau}_{x_{t-1}}](\bar{\tau}_{x_{t-1}}^{-1}(x_t))\in \mathbb{R}^{n_x \times n_a}$ the rectangular Jacobian matrix and write $a_t=\bar{\tau}_{x_{t-1}}^{-1}(x_t)$, i.e., this is the unique action such $\bar{\tau}_{x_{t-1}}(a_t) = x_t$. By a standard result from differential geometry \citep{krantz2008geometric,caterini2021rectangular}, the transition density (\ref{eq:transcompo}) satisfies
            \begin{equation}
                f_\theta(x_t | x_{t-1})
                =\pi_\theta\left(a_t | x_{t-1}\right) \Big|\textup{det} ~J[\bar{\tau}_{x_{t-1}}]^{\textup{T}}(a_t)J[\bar{\tau}_{x_{t-1}}](a_t) \Big|^{-1/2} \mathbb{I}(x_t \in \mathcal{M}_{x_{t-1}}).
            \end{equation}
            It follows directly that $\nabla_\theta \log{f_\theta(x_t | x_{t-1})}= \nabla_\theta \log{\pi_{\theta}\left(a_t | x_{t-1}\right)}$.
        \end{proof}
        Indeed for the marginals $\hat{p}_\theta\left(x_{t-1:t} | y_{0:\min\{t+L,T\}}\right)$, we can store the actions corresponding to transitions $x_{t-1} \rightarrow x_t$ during filtering, and it follows that for the class of SSMs described above, the score estimate reduces to: 
        \begin{eqnarray}
            \label{eq:estimate-score-function-us}
            \widehat{\nabla_\theta \ell_T}(\theta) &= &\sum_{t=0}^{T} \int \nabla_\theta \log{g_\theta(y_t | x_t)} ~\hat{p}_\theta(x_{t} | y_{0:\textup{min}\{t+L,T\}})\mathrm{d}x_t \nonumber\\
            &  &+\sum_{t=1}^{T} \int \nabla_\theta \log{\pi_\theta(a_t | x_{t-1})}  ~\hat{p}_\theta(x_{t-1:t} |y_{0:\textup{min}\{t+L,T\}})\mathrm{d}x_{t-1:t},
        \end{eqnarray}
        where we use Lemma \ref{lemma:transition-policy-equivalence} to replace the gradient of the transition log density with the gradient of the policy log density in \eqref{eq:estimate-score-function}, and where $a_t$ is the action sampled to go from $x_{t-1}$ to $x_t$.

\section{Experiments}
    \label{sec:experiments}
    \textbf{Problem Setting.} Our experiments focus on the problem of state estimation of observed road users (in particular other vehicles) from the viewpoint of an AV, which involves the estimation of 2D poses from an observed sequence of 2D convex polygons in a ``bird's eye view'' (BEV) constructed from LiDAR point clouds at each time step. For these experiments, we assume that the size of the observed objects, the pose of the AV, and the association of observations with their corresponding objects are known a priori. Some observations (and their corresponding states) are shown in Figure \ref{fig:observations:real}. Here, the observation model must learn to describe the likelihood of 2D points around the periphery of the observed road user (see \citep{granstrom2016extended} for a review on such models), while the transition model must learn to describe driving behaviour. We use a feed-forward neural network to parameterise our observation model, where we provide it with range, bearing, and relative bearing from the viewpoint of the corresponding AV as features (Appendix \ref{app:obs-model}), and factor our transition model into a deterministic and differentiable motion model based on Ackermann dynamics \citep{lynch2017modern} (Appendix \ref{app:motion-model}), and a policy parameterised by another feed-forward neural network (Appendix \ref{app:policy}).
    
    \begin{figure}
        \centering
        \begin{subfigure}[t]{0.36\textwidth}
            \centering
            \includegraphics[width=0.5\textwidth]{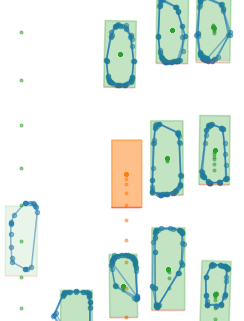}
            \caption{Observations from the real data.}
            \label{fig:observations:real}
        \end{subfigure}
        \hfill
        \begin{subfigure}[t]{0.62\textwidth}
            \centering
            \includegraphics[width=0.5\textwidth]{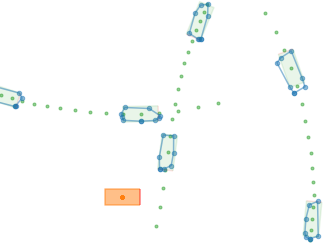}
            \caption{Observations from the synthetic data.}
            \label{fig:observations:synthetic}
        \end{subfigure}
        \hfill
        \caption{Observations from real and synthetic data. The AV (orange) observes a set of 2D points (blue) forming a polygon around the road users. The true bounding boxes of each road user (green) were manually labelled in the real data, and pre-determined while constructing the synthetic data.}
        \vspace{-4mm}
        \label{fig:observations}
    \end{figure}
    
    \textbf{Baselines, Datasets, and Metrics.} We compare the quality of the models learned using PF-SEFI (our method), DPF-SGR \citep{scibior2021differentiable}, PFNET \citep{karkus2018particle}, and differentiating through a vanilla PF (ignoring the bias introduced by resampling). We evaluate our method (and the baselines) on real data collected from an AV in an urban environment, equipped with LiDARs, cameras, and radar sensors. All sensors were used to associate LiDAR points to their corresponding objects, and the observations shown in Figure \ref{fig:observations:real} were obtained via a convex hull computation of the associated LiDAR points. In addition to using real data, we generate two synthetic datasets (with 25 and 50 step trajectories), using a hand-crafted policy, and an observation model trained using supervised learning on manually labelled trajectories (see Appendix \ref{app:obs-model} and \ref{app:trans-model} for more details). Example observations are shown in Figure \ref{fig:observations:synthetic}. Unlike with real data, where the true models are unknown, synthetic datasets allow us to compare the learned models against a known ground truth. We measure the quality of learned models using the following metrics:
    \begin{itemize}[noitemsep,topsep=0pt,labelindent=1.5em,leftmargin=*]
        \item \textit{Marginal Log Likelihood (MLL)}: The marginal log likelihood $\ell_T(\theta)$ given by filtering observations $y_{0:T}$ using the learned models.
        \item \textit{Average Displacement Error (ADE)} and \textit{Average Yaw Error (AYE)}: The average error in the positions and yaws respectively of the smoothed state estimates $\mathbb{E}_{\theta}(x_{0:T} | y_{0:T})$ against the true poses, $\bar{x}_{0:T}$. For the synthetic data, the true poses are sampled while generating the data; for the real data, the true poses are obtained by humans manually labelling object trajectories from videos. These measure the quality of the learned models for the purposes of state estimation.
        \item \textit{Average Observation True Log Likelihood (AOTLL)}: The average log likelihood of observations conditioned on the true states under the learned observation model, i.e., $\frac{1}{T+1}\sum^{T}_{t=0}\log{g_{\theta}(y_t | \bar{x}_t)}$. This measures the quality of the learned observation model.
        \item \textit{Average Policy True Log Likelihood (APTLL)}: The average log likelihood of true actions, $\bar{a}_{1:T}$, (only available for experiments with synthetic data since it is not possible to manually label latent actions) conditioned on the true states under the learned policy, i.e., $\frac{1}{T}\sum^{T}_{t=1} \log{\pi_{\theta}(\bar{a}_t | \bar{x}_{t-1})}$. This measures the quality of the learned policy.
    \end{itemize}
    
    \textbf{Results.} Figure \ref{fig:mlls} shows the progress of the learned models by tracking MLL of held out test data for each of the three datasets (synthetic data with 25 steps, synthetic data with 50 steps, and real data with 60 steps), and for each of the four methods (PF-SEFI, DPF-SGR, PFNET, and PF). Table \ref{tab:metrics} summarise the performance of the learned models at convergence. We pick the best hyper-parameters, smoothing lag $L$ for PF-SEFI, and trade-off parameter $\alpha$ for PFNET, in each of the experiments. Appendix \ref{app:hyper-parameters} includes analysis of the training sensitivity of each of the hyper-parameters (see Figures \ref{fig:sefi-lag-sweep}, \ref{fig:real-data-lag-particles-tradeoff}, and \ref{fig:pfnet-alpha-sweep}). We find that PF-SEFI improves with increasing $L$ up to a point, past which it is insensitive to the choice of $L$ (see Figure \ref{fig:real-data-lag-particles-tradeoff}, Appendix \ref{app:hyper-parameters}).
    
    In our experiments with \textit{synthetic data} with 25 steps (Figure \ref{fig:mll:synthetic_25_mlls} and Experiment A in Table \ref{tab:metrics}), we observe a clear gap in performance of PF-SEFI and DPF-SGR relative to PFNET and PF. The improvements over PF are likely due to the bias in PF's score estimates due to the non-differentiable resampling step, while the improvements over PFNET are likely due to adverse effects of not resampling with the correct distribution at each time step. While PF-SEFI and DPF-SGR perform similarly on this dataset, the difference is stark in the case of synthetic data with 50 steps (Figure \ref{fig:mll:synthetic_50_mlls} and Experiment B in Table \ref{tab:metrics}). PF-SEFI is invariant to the length of the trajectories used, converging stably; however, all other methods, struggle to learn useful models. We postulate that since each of the baselines, in one way or another, differentiate through all time steps of the filter, the variance in their score estimates is too high for good learning through gradient ascent.\footnote{The authors of DPF-SGR \citep{scibior2021differentiable} recommend the use of stop gradients not only for particle weights after resampling, i.e., $\Tilde{v}^i_t = \bar{v}^{a^i}_t / \bot \bar{v}^{a^i}_t$ \cite[Algorithm 1]{scibior2021differentiable}, but also, in the case of bootstrap particle filters, while computing the likelihood ratio $v^i_t = \Tilde{v}^i_{t-1} p_\theta(x^i_t, y_t | x^{a^i}_{t-1}) / \bot q_\theta(x^i_t | x^{a^i}_{t-1})$ before resampling, and while sampling from $x^i_t \sim q_\theta(\cdot | x^{a^i}_{t-1})$ \cite[Section 4.1]{scibior2021differentiable}. While these additional stop gradients significantly reduce variance, our experiments with them yielded extremely poor overall performance (even with synthetic data with 25 steps). The results we report here thus make use of stop-gradients only for particle weights after resampling.} 

\begin{figure}
    \centering
    \begin{subfigure}[t]{0.32\textwidth}
        \centering
        \includegraphics[width=\textwidth]{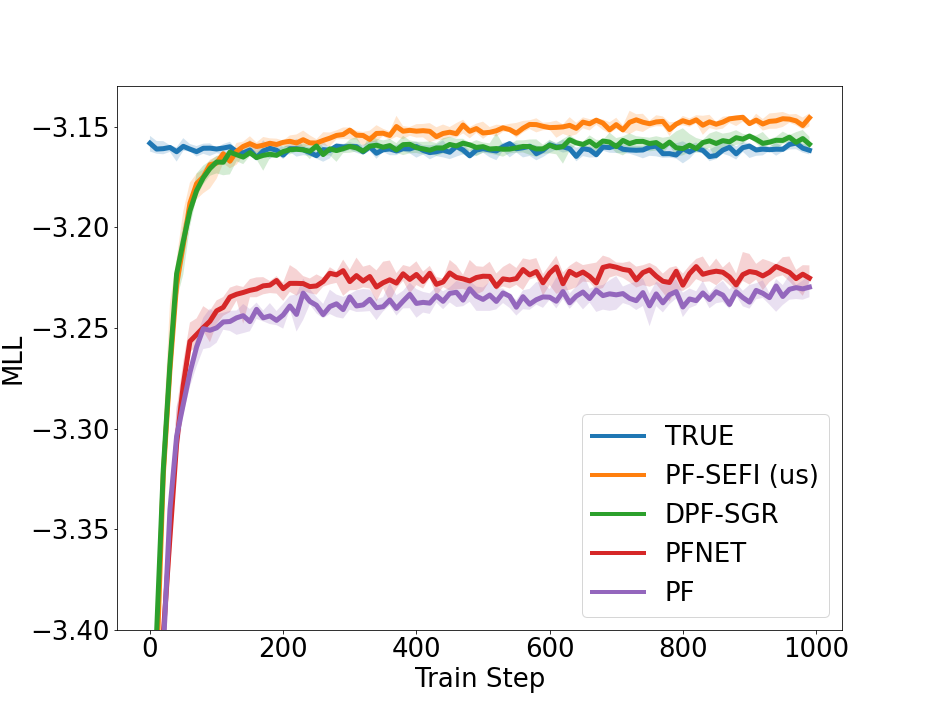}
        \caption{MLL of test synthetic data with 25 step trajectories.}
        \label{fig:mll:synthetic_25_mlls}
    \end{subfigure}
    \hfill
    \begin{subfigure}[t]{0.32\textwidth}
        \centering
        \includegraphics[width=\textwidth]{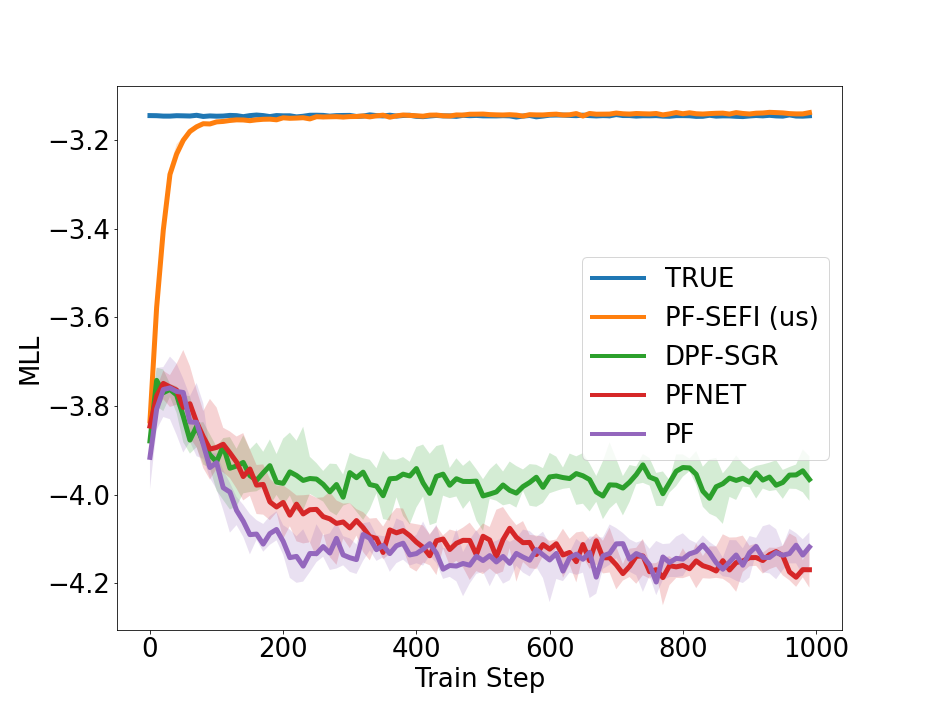}
        \caption{MLL of test synthetic data with 50 step trajectories.}
        \label{fig:mll:synthetic_50_mlls}
    \end{subfigure}
    \begin{subfigure}[t]{0.32\textwidth}
        \centering
        \includegraphics[width=\textwidth]{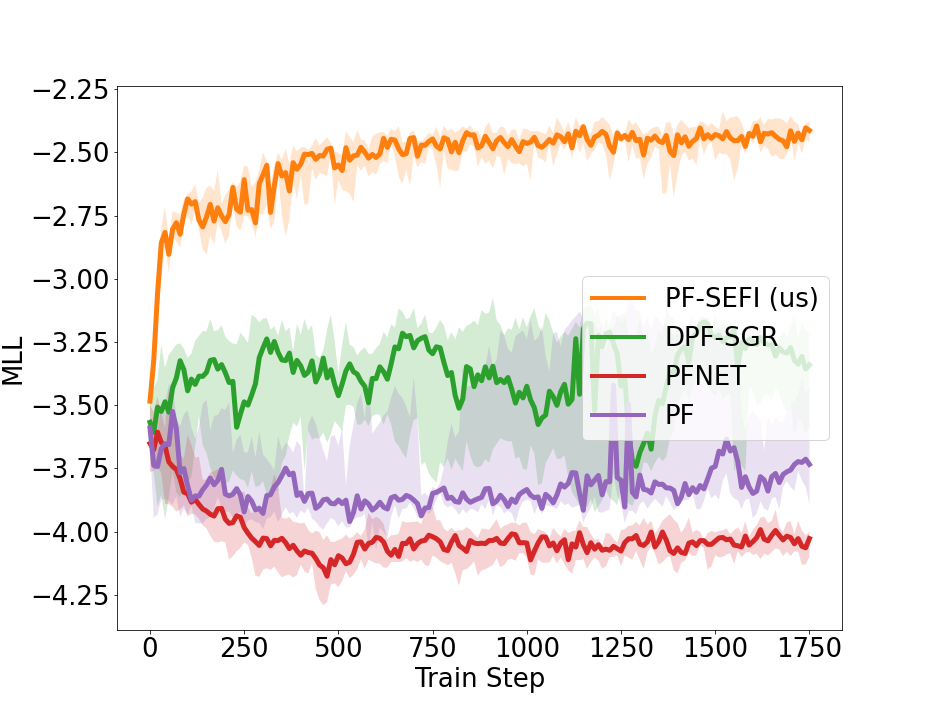}
        \caption{MLL of test real data with 60 step trajectories.}
        \label{fig:mll:real_60_mlls}
    \end{subfigure}
    \hfill
    \caption{Marginal Log Likelihood (MLL) on synthetic and real test data for models trained using PF-SEFI (us), DPF-SGR, PFNET, and PF, plotted against the corresponding training steps. For synthetic data we also show the MLL of the true models.}
    \label{fig:mlls}
    \vspace{-2em}
\end{figure}

\begin{table}
    \scriptsize
    \centering
    \caption{Metrics computed on held out test data comparing PF-SEFI (us) against baselines. We ran experiments using 3 different datasets - (A) synthetic data with 25 steps, (B) synthetic data with 50 steps, and (C) real data with 60 steps. For experiments (A) and (B), we also compare against the performance of the true models. For experiment (C), we compare against the supervised observation model trained using manually labelled trajectories. For MLL, AOTLL, and APTLL, higher values imply better models, while for ADE and AYE, lower values imply better models.}
    \label{tab:metrics}
    \begin{tabular}{||c||l||c|c|c||c|c||}
        \toprule
        \textbf{Exp.} & \textbf{Method} & \textbf{MLL} & \textbf{AOTLL} & \textbf{APTLL} & \textbf{ADE (m)} & \textbf{AYE (rad)} \\
        \midrule
        \multirow{5}{*}{A}  & TRUE          & -3.161 $\pm$ 0.003 & -2.128 & 2.674 & 0.090 $\pm$ 0.001 & 0.014 $\pm$ 0.000\\
                            & PF-SEFI (us)  & \textbf{-3.147 $\pm$ 0.004} & -2.285 $\pm$ 0.028 & \textbf{2.661 $\pm$ 0.014} & 0.186 $\pm$ 0.021 & 0.016 $\pm$ 0.000\\
                            & DPF-SGR       & -3.159 $\pm$ 0.004 & \textbf{-2.265 $\pm$ 0.010} & 2.594 $\pm$ 0.027 & \textbf{0.165 $\pm$ 0.008} & \textbf{0.014 $\pm$ 0.000}\\
                            & PFNET         & -3.225 $\pm$ 0.004 & -2.487 $\pm$ 0.026 & 2.621 $\pm$ 0.013 & 0.264 $\pm$ 0.019 & 0.017 $\pm$ 0.000\\
                            & PF            & -3.229 $\pm$ 0.006 & -2.484 $\pm$ 0.026 & 2.576 $\pm$ 0.021 & 0.245 $\pm$ 0.021 & 0.017 $\pm$ 0.000\\
        \midrule
        \multirow{5}{*}{B}  & TRUE          & -3.145 $\pm$ 0.002 & -2.165 & 2.693 & 0.088 $\pm$ 0.001 & 0.012 $\pm$ 0.000\\
                            & PF-SEFI (us)  & \textbf{-3.141 $\pm$ 0.005} & \textbf{-2.283 $\pm$ 0.015} & \textbf{2.505 $\pm$ 0.042} & \textbf{0.165 $\pm$ 0.013} & \textbf{0.014 $\pm$ 0.000}\\
                            & DPF-SGR       & -3.966 $\pm$ 0.050 & -2.636 $\pm$ 0.031 & 0.811 $\pm$ 0.130 & 2.828 $\pm$ 0.415 & 0.142 $\pm$ 0.016\\
                            & PFNET         & -4.169 $\pm$ 0.046 & -2.901 $\pm$ 0.039 & 0.539 $\pm$ 0.077 & 2.809 $\pm$ 0.176 & 0.148 $\pm$ 0.008\\
                            & PF            & -4.118 $\pm$ 0.038 & -2.841 $\pm$ 0.025 & 0.681 $\pm$ 0.122 & 2.502 $\pm$ 0.042 & 0.137 $\pm$ 0.007\\
        \midrule
        \multirow{5}{*}{C}  & SUPERVISED    & N/A & -2.224 $\pm$ 0.006 & N/A & N/A & N/A\\
                            & PF-SEFI (us)  & \textbf{-2.447 $\pm$ 0.029} & \textbf{-1.973 $\pm$ 0.029} & N/A & \textbf{0.275 $\pm$ 0.011} & \textbf{0.034 $\pm$ 0.006}\\
                            & DPF-SGR       & -3.297 $\pm$ 0.287 & -2.236 $\pm$ 0.218 & N/A & 0.643 $\pm$ 0.177 & 0.081 $\pm$ 0.477\\
                            & PFNET         & -4.019 $\pm$ 0.098 & -2.752 $\pm$ 0.079 & N/A & 0.746 $\pm$ 0.091 & 1.015 $\pm$ 0.159\\
                            & PF            & -3.848 $\pm$ 0.045 & -2.639 $\pm$ 0.140 & N/A & 0.701 $\pm$ 0.109 & 1.082 $\pm$ 0.364\\
        \bottomrule
    \end{tabular}
    \vspace{-4mm}
\end{table}     The results of our experiments with \textit{real data} with 60 steps (Figure \ref{fig:mll:real_60_mlls} and Experiment C in Table \ref{tab:metrics}) are consistent with Experiment B (i.e., with experiments on synthetic data with 50 steps) and show that PF-SEFI is able to learn useful models. The learned observation model using PF-SEFI performs even better than the model that was trained offline through supervision with manually labelled data (see AOTLL in Table \ref{tab:metrics} for Experiment C). We also find that sampling from the learned model produces observations that are qualitatively similar to the real data (Appendix \ref{app:generated-observations}). While the supervised model is trained only on the subset of the observations that are labeled (labelling only a subset is common in practical applications due to the cost of labelling), PF-SEFI, by contrast, can leverage \textit{all} observations in a self-supervised fashion. Moreover, we speculate that the labels contain noise and that the labelling distribution is biased towards observations that are easy to label. Both limitations hinder supervised learning.

\vspace{-0.5em}
\section{Discussion, Limitations, and Future Work}
    \vspace{-0.5em}
    \label{sec:conclusion}
    
    In this work, we proposed an efficient particle-based method for estimating the score function to learn a wide class of SSMs in a self-supervised way. Compared to previous particle methods that estimate the score, PF-SEFI is more computationally efficient, allowing us to scale to learning models with many parameters. Unlike alternative methods, PF-SEFI is applicable to SSMs where the transition distribution is concentrated on a low-dimensional manifold, allowing us to apply it to a real-world AV object tracking problem. We showed empirically that our method learns better models and is more stable in training than methods that use automatic differentiation to estimate the score, and that we can learn an observation model that outperforms one trained using supervised learning.
    
    While this solution is ideal for our problem, it does have a number of limitations. Most notably, it is restricted to maximising the marginal log-likelihood of the data, while differentiating through the filter allows for arbitrary differentiable loss functions. Furthermore, our method is not suitable for estimating the parameters of a proposal distribution. Beyond these algorithmic limitations, in our application, the models that we used were not very expressive. For the observation model, we did not model important phenomena that affect partial observability such as occlusions and we restricted our states and observations to 2D. For the policy, we used a simplified policy with only basic features that are insufficient for controlling an agent in simulation. Furthermore, the policy and the motion model are both specific to vehicles, and currently exclude other road users such as pedestrians.
    
    In future work, we aim to scale up our problem setting, by making both models more expressive, and to estimate more state dimensions, such as full 3D poses and sizes of objects. We also believe that learning policies as components of an SSM to explicitly account for observation noise is, in practice, critical for learning good driving behaviour from demonstrations. Such policies could be used as models for predicting the behaviour of other road-users, or to control agents in simulation, and the method we proposed in this work offers an ideal starting point to explore this.

\clearpage
\acknowledgments{We thank Drago Anguelov, Charles Qi and Congcong Li for their helpful feedback on a draft of the paper. We also thank the reviewers for taking the time to give detailed and useful feedback on our initial submission. Finally, we thank the Waymo Research team and DeepMind for supporting this project.}

\newpage
\appendix
\section{Observation Model}
    \label{app:obs-model}
    \begin{figure}
        \centering
        \begin{subfigure}[t]{0.49\textwidth}
            \centering
            \includegraphics[width=0.75\textwidth]{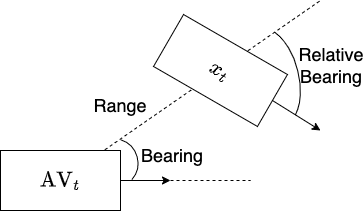}
            \caption{Input features provided to the observation model's neural network.}
            \label{fig:obs-model:features}
        \end{subfigure}
        \hfill
        \begin{subfigure}[t]{0.49\textwidth}
            \centering
            \includegraphics[width=0.75\textwidth]{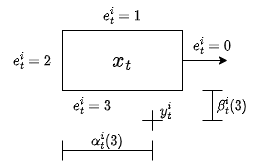}
            \caption{Output representation of an observed point $y^i_t$ using the triplet $\left(e^i_t, \alpha^i_t(e^i_t), \beta^i_t(e^i_t)\right)$.}
            \label{fig:obs-model:representation}
        \end{subfigure}
        \hfill
        \caption{Input and output representations for the observation model.}
    \end{figure}
    For the problem setting described in Section \ref{sec:experiments}, our observation model is concerned with the distribution of 2D points around the peripheries of observed road users. Such models are reviewed in \citep{granstrom2016extended}. Let the $i^{\mathrm{th}}$ point from an observation $y_t$ be $y^i_t$. We assumed independence across different points, i.e.
    \begin{equation}
        g_\theta(y_t | x_t) = \prod_i g_\theta\left(y^i_t | x_t\right).
    \end{equation}
    In order to conveniently sample $y^i_t$ or to measure its likelihood, we represented each point via a triplet, $\left(e^i_t, \alpha^i_t\left(e^i_t\right), \beta^i_t\left(e^i_t\right)\right)$, where $e^i_t \in \{0, 1, 2, 3\}$ is a categorical variable encoding the edge of the road user's bounding box, $\alpha^i_t\left(e^i_t\right)$ is the corresponding parallel offset, and $\beta^i_t\left(e^i_t\right)$ is the corresponding perpendicular offset (see Figure \ref{fig:obs-model:representation}). Such a triplet uniquely determines $y^i_t$, and we used following generative model to sample such points:-
    \begin{equation*}
        e^i_t \sim \textrm{Categorical}\left(\phi^0_\theta\left(x_t\right), \phi^1_\theta\left(x_t\right), \phi^2_\theta\left(x_t\right), \phi^3_\theta\left(x_t\right)\right),
    \end{equation*}
    \begin{eqnarray*}
        \textrm{if} \; e^i_t = 0, &\alpha^i_t &\sim \textrm{Uniform}(0, w) \; \textrm{and}\\
        &\beta^i_t &\sim \textrm{Laplace}\left(\mu=\phi^4_\theta\left(x_t\right), b=\phi^5_\theta\left(x_t\right)\right),\\
        \textrm{if} \; e^i_t = 1, &\alpha^i_t &\sim \textrm{Uniform}(0, l) \; \textrm{and}\\
        &\beta^i_t &\sim \textrm{Laplace}\left(\mu=\phi^6_\theta\left(x_t\right), b=\phi^7_\theta\left(x_t\right)\right),\\
        \textrm{if} \; e^i_t = 2, &\alpha^i_t &\sim \textrm{Uniform}(0, w) \; \textrm{and}\\
        &\beta^i_t &\sim \textrm{Laplace}\left(\mu=\phi^8_\theta\left(x_t\right), b=\phi^9_\theta\left(x_t\right)\right),\\
        \textrm{if} \; e^i_t = 3, &\alpha^i_t &\sim \textrm{Uniform}(0, l) \; \textrm{and}\\
        &\beta^i_t &\sim \textrm{Laplace}\left(\mu=\phi^{10}_\theta\left(x_t\right), b=\phi^{11}_\theta\left(x_t\right)\right),
    \end{eqnarray*}
    where $l$ and $w$ are the length and the width of the road user's bounding box respectively, and $\phi^{0:11}_\theta\left(x_t\right)$ are the outputs from a neural network with weights $\theta$. While the mapping $\left(e^i_t, \alpha^i_t\left(e^i_t\right), \beta^i_t\left(e^i_t\right)\right) \rightarrow y^i_t$ is unique, the reverse mapping has four representations depending on the choice of $e^i_t$, i.e., $e^i_t$ is a latent variable. The likelihood of a point, $g_\theta\left(y^i_t | x_t\right)$, was therefore obtained by:-
    \begin{equation}
        g_\theta\left(y^i_t | x_t\right) = \sum_{e^i_t=0}^3 \phi^{e^i_t}_\theta\left(x_t\right) p_\theta\left(\alpha^i_t\left(e^i_t\right), \beta^i_t\left(e^i_t\right) | e^i_t, x_t\right),
    \end{equation}
    where we project $y^i_t \rightarrow \left(e^i_t, \alpha^i_t\left(e^i_t\right), \beta^i_t\left(e^i_t\right)\right)$ for each $e^i_t \in \{0, 1, 2, 3\}$, and marginalise over it. We use a feed-forward neural network with 4 hidden layers, each with 16 Tanh units, which outputs 12 parameters, $\phi^{0:11}_\theta\left(x_t\right)$. We provide the network with 5 input features - range, bearing, relative bearing, length, and width - of the road-user's bounding box as measured from the observing AV's viewpoint (see Figure \ref{fig:obs-model:features}).
    
    For each of the experiments in Section \ref{sec:experiments} (with synthetic and real data), we used the same observation model design. Moreover, we trained an observation model using supervision from a dataset of manually labelled trajectories, $\bar{x}_{0:T}$, by maximising the AOTLL. This model was then used to generate the synthetic data and also as a baseline for the experiments with real data.

\section{Transition Model}
    \label{app:trans-model}
    As described in Section \ref{sec:method}, the class of SSMs that we are concerned with involves a transition function, $f_\theta(x_t | x_{t-1})$, that factorises into a policy which produces actions, $\pi_\theta(a_t | x_{t-1})$, and a deterministic, differentiable, and injective motion model, such that $x_t = \tau(x_{t-1}, a_t)$. In this section, we provide more details on the motion model and on the policies used for the experiments described in Section \ref{sec:experiments}.

    \subsection{Motion Model}
        \label{app:motion-model}
        All experiments used a \textbf{state space} that consists of the 2D Pose $(x, y, \theta)$ of an observed road user, in addition to its instantaneous linear speed $v$ and curvature $\kappa$. Moreover, the \textbf{action space} used consists of linear acceleration $a$ and pinch $p$, i.e., the instantaneous rate of change of curvature. This choice of actions, i.e. acceleration and pinch, naturally maps to the controls exercised by road users (vehicles users in particular), i.e., to gas and rate of change of steering respectively. The calculations below compute the next state, $(x(t), y(t), \theta(t), v(t), \kappa(t))$, from the previous state $(x_0, y_0, \theta_0, v_0, \kappa_0)$ under the influence of constant actions $(a, p)$ for $t \in [0, \Delta t]$.
        \begin{eqnarray}
            \textrm{Clearly } & \dot{\kappa}(t) = p &\Rightarrow \kappa(t) = \kappa_0 + pt,\\
            \textrm{and } & \dot{v}(t) = a &\Rightarrow v(t) = v_0 + at.
        \end{eqnarray}
        \begin{eqnarray}
            \textrm{Since } & \dot{\theta}(t) &= v(t) \kappa(t),\; \textrm{we have} \;\\
            & \dot{\theta}(t) & = v_0\kappa_0 + (v_0p + a\kappa_0)t + apt^2,\\
            \Rightarrow & \theta(t) &= \theta_0 + v_0\kappa_0t + (v_0p + a\kappa_0)\frac{t^2}{2} + ap\frac{t^3}{3}.
        \end{eqnarray}
        \begin{eqnarray}
            \textrm{Finally, using} & \dot{x}(t) &= v(t)\cos{\theta(t)}, \; \textrm{and} \; \dot{y}(t) = v(t)\sin{\theta(t)},\; \textrm{we have} \;\\
            & x(t) &= x_0 + \int_0^t v(s) \cos{\theta(s)} ~\mathrm{d}s,\\
            \textrm{and} & y(t) &= y_0 + \int_0^t v(s) \sin{\theta(s)} ~\mathrm{d}s\\
            && = y_0 + \int_0^t v(s) \cos{\left(\frac{\pi}{2} - \theta(s)\right)} ~\mathrm{d}s.
        \end{eqnarray}
        To make these integrals analytically tractable, we drop the cubic term in $\theta_t$, i.e. $ap\frac{t^3}{3}$, and use the integral\footnote{The closed form integral was obtained using Wolfram Alpha.} $\bigintsss (a + bs) \cos(c + ds + es^2) ~\mathrm{d}s$ with appropriate coefficients to compute $x(t)$ and $y(t)$. This approximation is justified as for the experiments described in Section \ref{sec:experiments}, we use a small $\Delta t$ of $\approx 0.33 \; \textrm{seconds}$.
    \subsection{Policy}
        \label{app:policy}
        For \textit{synthetic data}, we designed a simple state-dependent stochastic policy that modulated its mean acceleration and pinch as a function of speed. The mapping from state to actions is described in Figure \ref{fig:simple-policy}. The policy then had 2 learnable parameters - the standard deviations of its acceleration and pinch. The advantage of having only 2 learnable parameters is that it allowed us to easily verify if a method was converging to the correct values or not.
        
        \begin{figure}
            \centering
            \includegraphics[width=\textwidth]{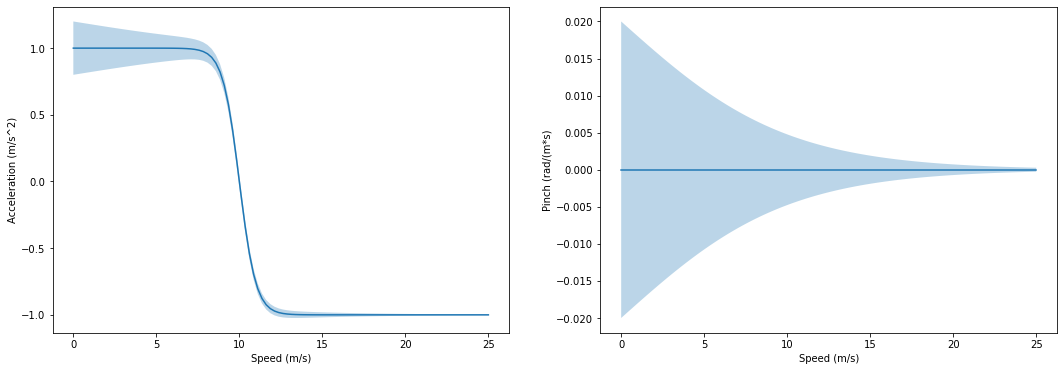}
            \caption{State to action mapping of the simple policy. The solid line represents the mean action taken at a given speed, while the shaded regions represent one standard deviation of Gaussian noise around that action. At a high-level, the policy accelerates at low speeds and decelerates at higher speeds. The policy also applies less pinch at higher speeds.}
            \label{fig:simple-policy}
        \end{figure}
        
        On \textit{real data}, we used a more expressive policy which produces a multivariate Gaussian distribution over acceleration and pinch conditioned on its inputs. The policy's inputs are the instantaneous speed and curvature of an object, which are then fed to a 3-layer feedforward neural network, with 2 hidden layers with 32 ReLU units, outputting 5 parameters - the means of acceleration and pinch, and the 3 elements of the lower triangular matrix representation of the covariance of the two. This gives a total of 1,317 learnable parameters.

\section{Training Setup}
    \label{app:hyper-parameters}
    In this section, we present our training setup for the experiments described in Section \ref{sec:experiments}. We show sample trajectories from each of the datasets (synthetic and real), and provide the set of hyper-parameters that were used for each of the experiments. All experiments (including the ones used for picking the best set of hyper-parameters) were repeated 10 times to obtain the median and interquartile ranges that are shown in Figures \ref{fig:mlls}, \ref{fig:sefi-lag-sweep}, and \ref{fig:pfnet-alpha-sweep}, and in Table \ref{tab:metrics}. All experiments used the default settings of the Adam optimiser in TensorFlow with a learning rate of 0.01. We found that smaller learning rates yield similar results, but require proportionately longer training time, while larger learning rates cause instability.
    
    \begin{figure}
        \centering
        \begin{subfigure}[t]{0.3\textwidth}
            \centering
            \includegraphics[width=\textwidth]{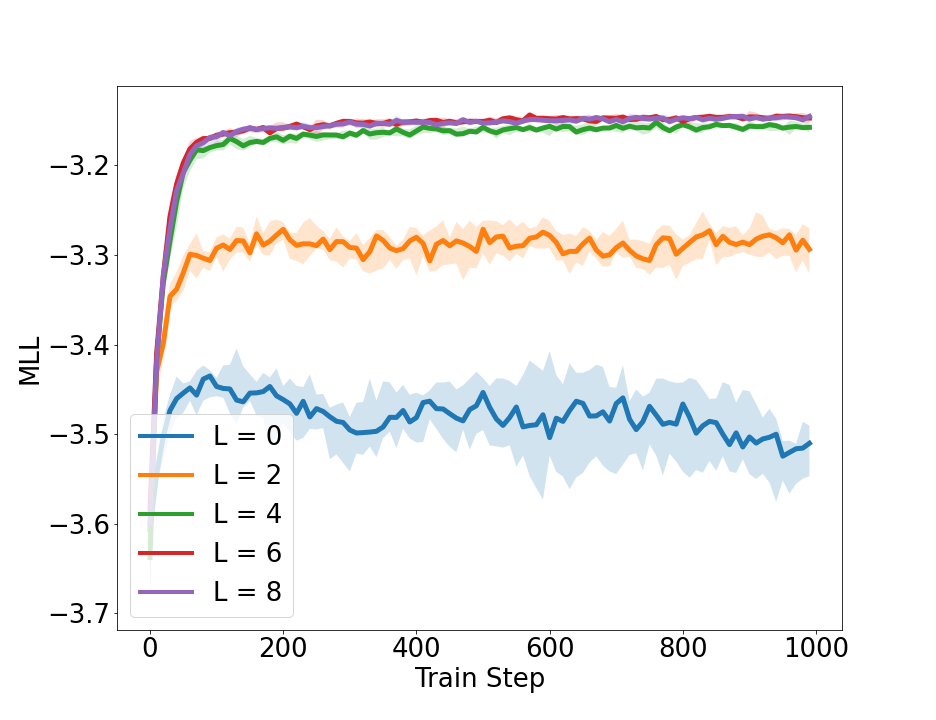}
            \caption{MLL of test synthetic data with 25 step trajectories using PF-SEFI with different values for the smoothing lag ($L$).}
            \label{fig:sefi-lag-sweep:synthetic_25_mlls}
        \end{subfigure}
        \hfill
        \begin{subfigure}[t]{0.3\textwidth}
            \centering
            \includegraphics[width=\textwidth]{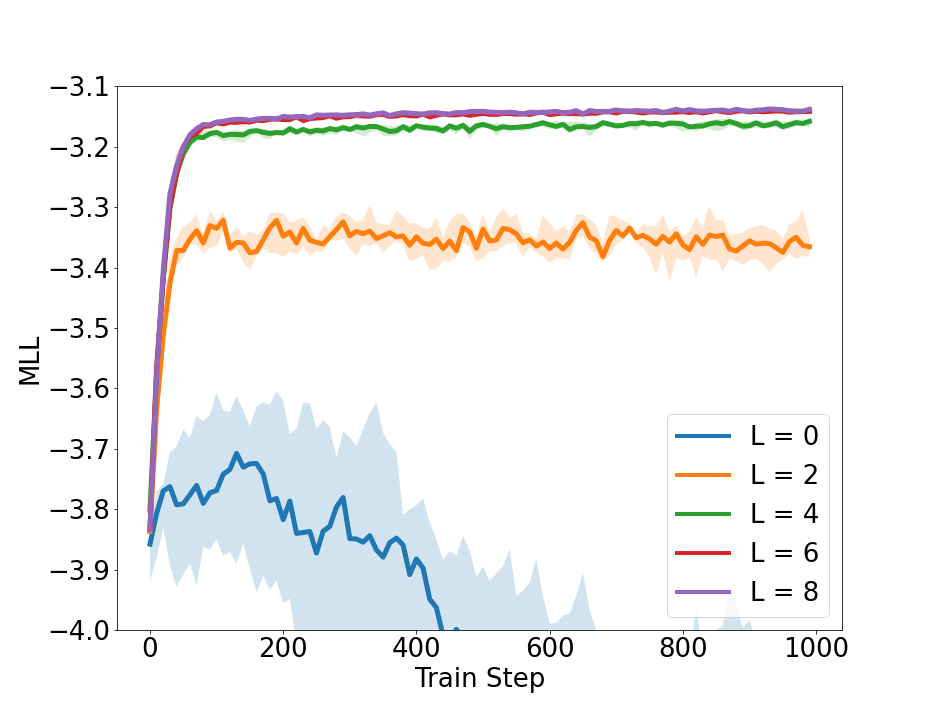}
            \caption{MLL of test synthetic data with 50 step trajectories using PF-SEFI with different values for the smoothing lag ($L$).}
            \label{fig:sefi-lag-sweep:synthetic_50_mlls}
        \end{subfigure}
        \hfill
        \begin{subfigure}[t]{0.3\textwidth}
            \centering
            \includegraphics[width=\textwidth]{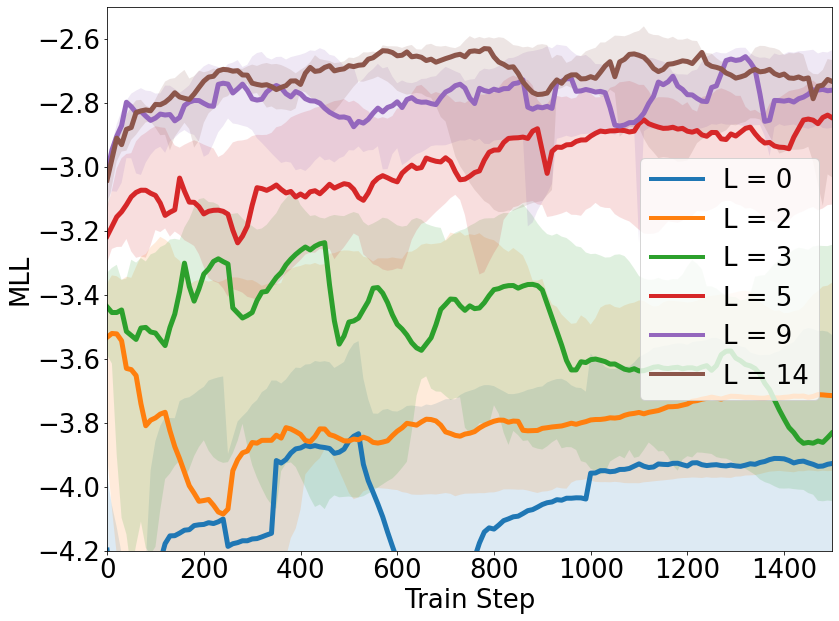}
            \caption{MLL of test real data with 60 step trajectories using PF-SEFI with different values for the smoothing lag ($L$) using 2048 training particles.}
            \label{fig:sefi-lag-sweep:real_60_mlls}
        \end{subfigure}
        \hfill
        
        \caption{Marginal Log Likelihood (MLL) of synthetic test data for models trained using PF-SEFI with different values for the smoothing lag ($L$) plotted against the corresponding training steps.}
        \label{fig:sefi-lag-sweep}
    \end{figure}
    
    \begin{figure}
        \centering
        \includegraphics[scale=0.23]{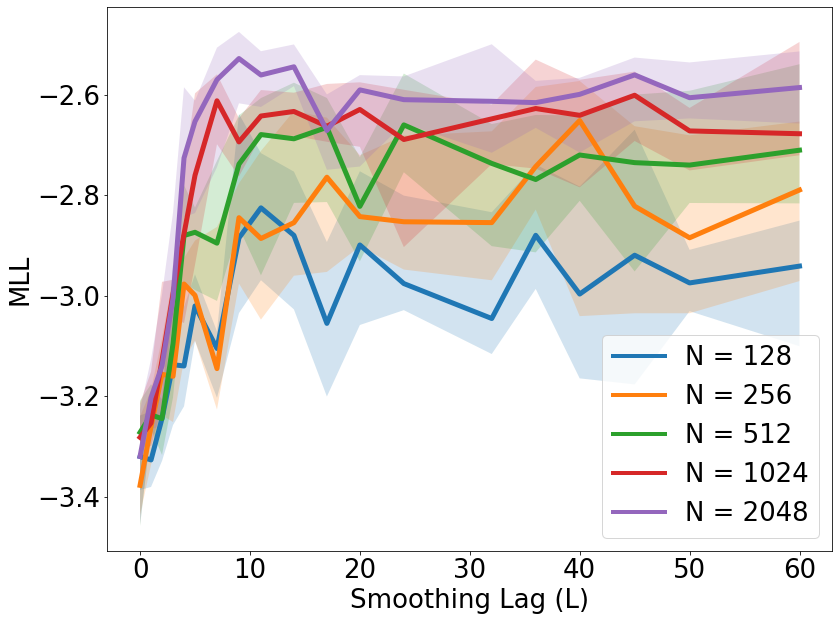}
        \caption{Maximum MLL of test real data with 60 step trajectories using PF-SEFI sweeping over different values for the smoothing lag ($L$) using different numbers of particles for training.}
        \label{fig:real-data-lag-particles-tradeoff}
    \end{figure}
    
    \begin{figure}
        \centering
        \begin{subfigure}[t]{0.32\textwidth}
            \centering
            \includegraphics[width=\textwidth]{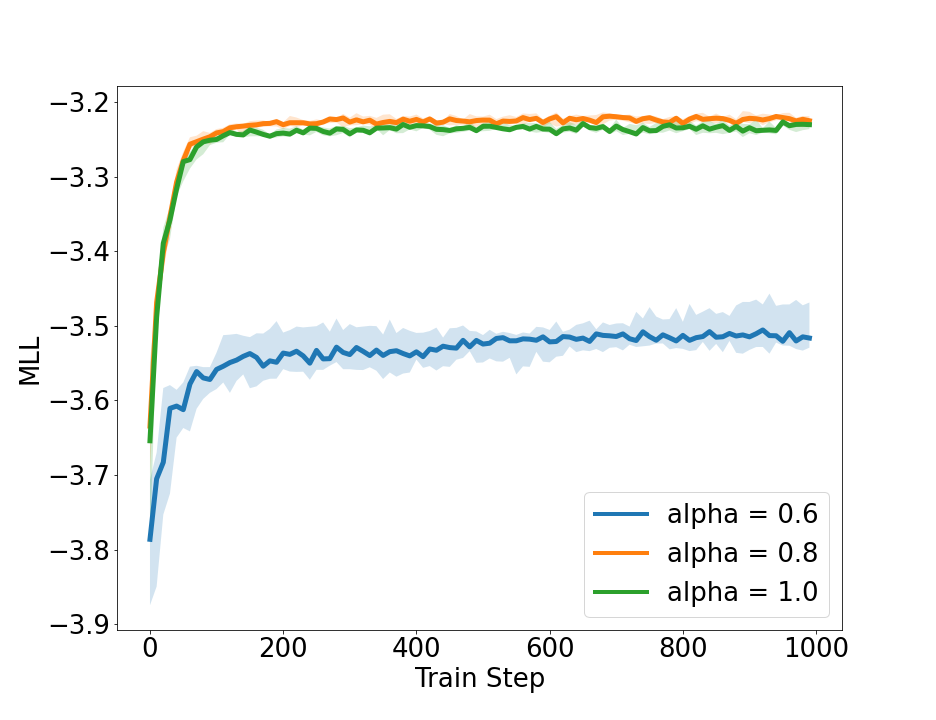}
            \caption{MLL of test synthetic data with 25 step trajectories using PFNET with different values for the trade-off parameter ($\alpha$).}
            \label{fig:pfnet-alpha-sweep:synthetic_25_mlls}
        \end{subfigure}
        \hfill
        \begin{subfigure}[t]{0.32\textwidth}
            \centering
            \includegraphics[width=\textwidth]{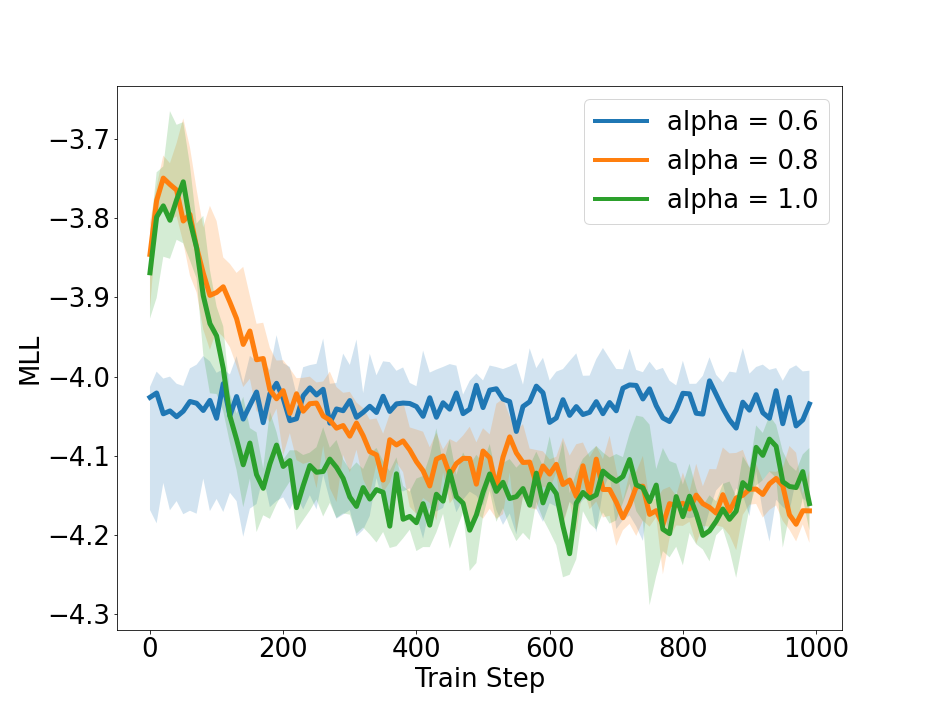}
            \caption{MLL of test synthetic data with 50 step trajectories using PFNET with different values for the trade-off parameter ($\alpha$).}
            \label{fig:pfnet-alpha-sweep:synthetic_50_mlls}
        \end{subfigure}
        \hfill
        \begin{subfigure}[t]{0.32\textwidth}
            \centering
            \includegraphics[width=\textwidth]{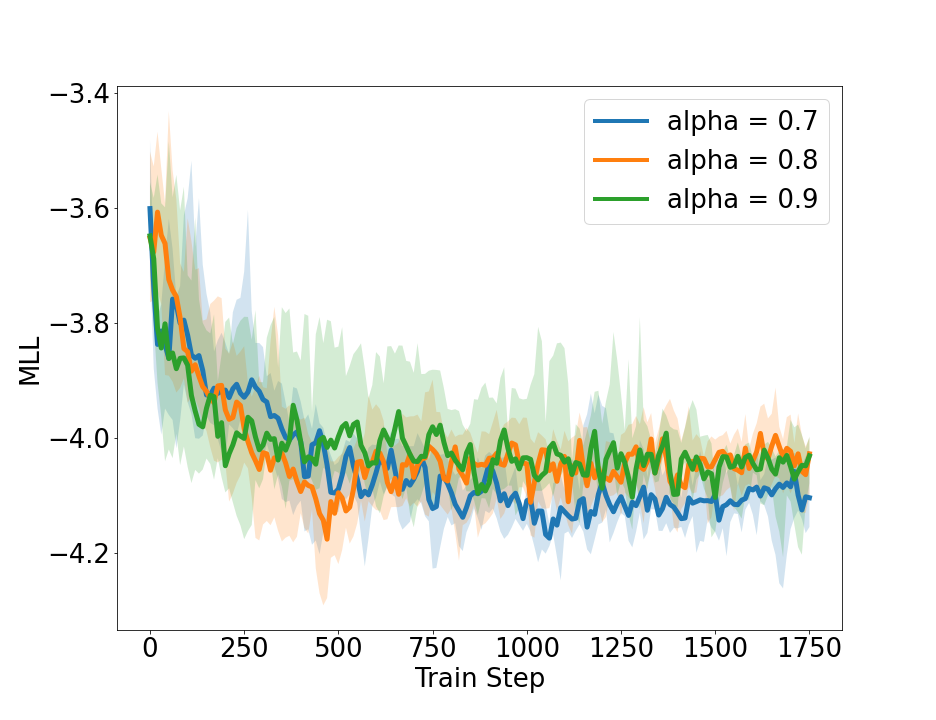}
            \caption{MLL of test real data with 60 step trajectories using PFNET with different values for the trade-off parameter ($\alpha$).}
            \label{fig:pfnet-alpha-sweep:real_60_mlls}
        \end{subfigure}
        \hfill
        \caption{Marginal Log Likelihood (MLL) of synthetic and real test data for models trained using PFNET with different values for the trade-off parameter ($\alpha$) plotted against the corresponding training steps.}
        \label{fig:pfnet-alpha-sweep}
    \end{figure}

    \subsection{Training on Synthetic Data}
        \begin{figure}
            \centering
            \subfloat[]{%
              \includegraphics[clip,width=0.6\textwidth]{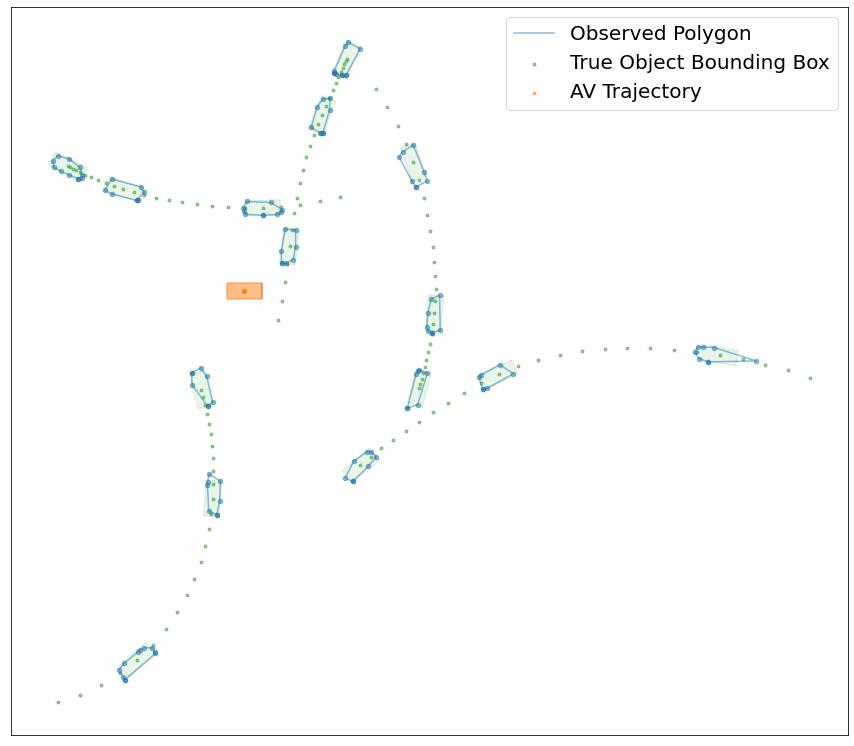}%
            }

            \subfloat[]{%
              \includegraphics[clip,width=0.6\columnwidth]{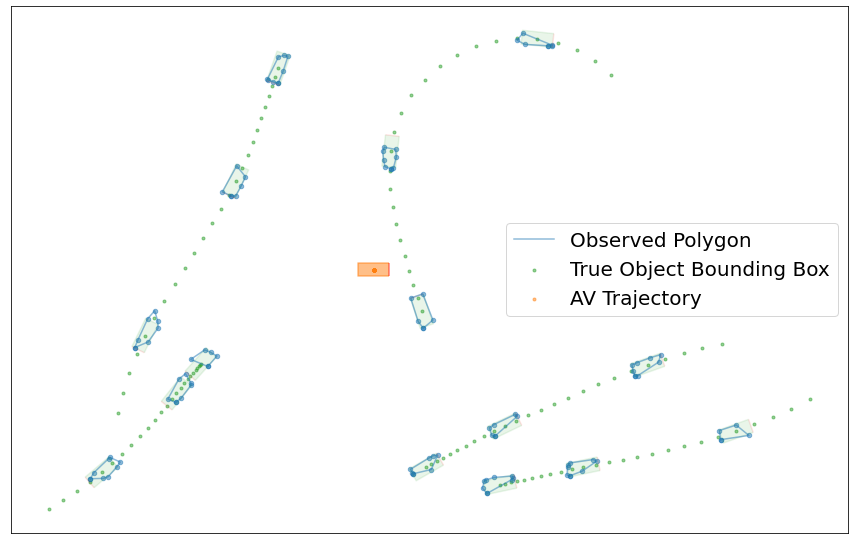}%
            }
            \caption{Examples of synthetic trajectories. The figure shows 3 snapshots of the state of multiple objects, each occurring 10 steps apart from each other. The green boxes are the true states of the objects at different timesteps, while the blue polygons are the observations that the AV (orange) makes. The green dots are the true $(x,y)$ coordinates of the objects at all timesteps.}
            \label{fig:synthetic-rollout}
        \end{figure}
        Sample trajectories from the generated synthetic data are shown in Figure \ref{fig:synthetic-rollout}. These samples were generated using a hand-crafted policy (see Section \ref{app:policy}), the motion model derived in Section \ref{app:motion-model}, and an observation model trained with supervised learning (see Section \ref{app:obs-model}). We generated two datasets (with 25 and 50 steps respectively), each containing 10 scenes with 100 objects each for training, and 2 scenes also with 100 objects each for evaluation. For every train step, we used all 100 objects from a single randomly sampled training scene, while for every evaluation step, we used all 100 objects from both evaluation scenes.

        Table \ref{tab:synthetic-hyper-params} tabulates the set of hyper-parameters that were used for the experiments discussed in Section \ref{sec:experiments}. Figures \ref{fig:sefi-lag-sweep:synthetic_25_mlls} and \ref{fig:sefi-lag-sweep:synthetic_50_mlls} show the effect of different values for the smoothing lag hyper-parameter ($L$) for PF-SEFI on synthetic data, while \ref{fig:sefi-lag-sweep:real_60_mlls} shows the effect of $L$ on real data. We observed that for synthetic data, the performance started to plateau at $L = 8$, and hence picked $L = 8$ for the final experiments. On real data, performance continues to improve at higher values of $L$ up to around $L = 14$ and we also note that variance in training is high when $L$ is too small. Moreover, Figures \ref{fig:pfnet-alpha-sweep:synthetic_25_mlls} and \ref{fig:pfnet-alpha-sweep:synthetic_50_mlls} show the effect of different values for the trade-off parameter ($\alpha$) for PFNET. While learning failed on synthetic data with 50 steps, we observed that $\alpha = 0.8$ marginally outperformed $\alpha = 1.0$ and significantly outperformed $\alpha = 0.6$.
        \begin{table}[]
            \centering
            \caption{Hyper-parameters used for experiments A (synthetic data with 25 step trajectories) and B (synthetic data with 50 step trajectories). Smoothing lag ($L$) is only relevant for PF-SEFI, and the trade-off parameter ($\alpha$) is only relevant for PFNET.}
            \label{tab:synthetic-hyper-params}
            \begin{tabular}{||l|c||}
                \toprule
                Hyper-Parameter                             & Value\\
                \midrule
                Learning Rate                               & 0.01\\
                Number of Epochs                            & 100\\
                Smoothing Lag ($L$) for PF-SEFI             & 8\\
                Trade-off Parameter ($\alpha$) for PFNET    & 0.8\\
                Number of Particles for Training            & 1024\\
                Number of Particles for Evaluation          & 4096\\
                \bottomrule
            \end{tabular}
        \end{table}

    \subsection{Training on Real Data}
        For training on real data, we used a dataset of real-world road-user trajectories observed by an AV. Many of the frames in this set were also labelled manually by human-labelers, allowing us to compute relevant tracking metrics such as ADE, AYE, and AOTLL under the different models that we learned.
        
        All trajectories were approximately 20 seconds long, where each step corresponded to 0.33 seconds in real time, giving 60 steps in discrete time. The dataset consisted of a training set with 1502 trajectories, and a test set containing 404 trajectories. Figure \ref{fig:real-rollout} shows some example trajectories.
        
        \begin{figure}
            \centering
            \begin{subfigure}[t]{0.3\textwidth}
                \centering
                \includegraphics[width=\textwidth]{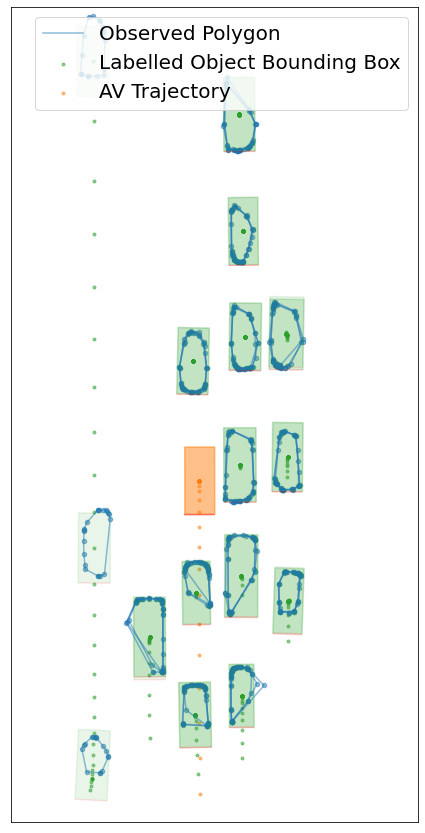}
                \label{fig:real-rollout:busy}
                \caption{The AV in a crowded multi-lane road surrounded by mostly stationary objects.}
            \end{subfigure}
            \hfill
            \begin{subfigure}[t]{0.69\textwidth}
                \centering
                \includegraphics[width=\textwidth]{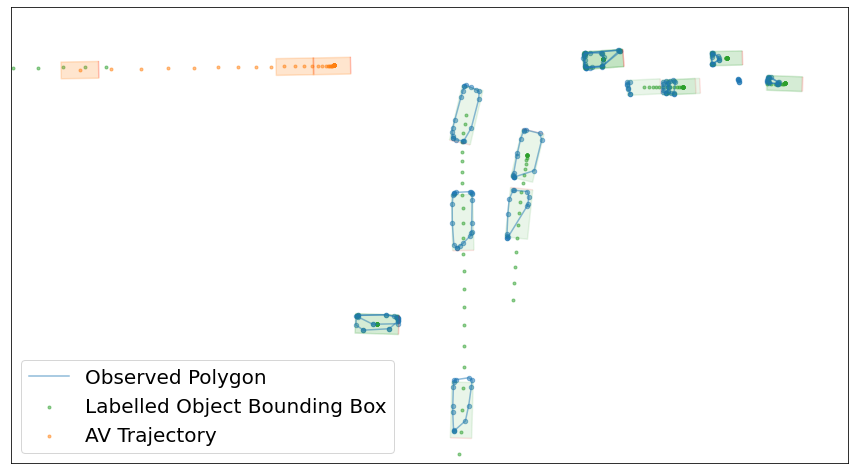}
                \label{fig:real-rollout:yield}
                \caption{The AV decelerating, yielding for other objects at an intersection.}
            \end{subfigure}
            \hfill
            \caption{Examples of real-data trajectories. The figure shows 3 snapshots of the state of multiple objects, each occurring 5 seconds apart from each other. The green boxes are the labelled bounding boxes of objects at different timesteps, while the blue polygons are the observations that the AV (orange) makes. The dots represent the labelled $(x,y)$ coordinates of the objects over time.}
            \label{fig:real-rollout}
        \end{figure}
        \begin{figure}
            \centering
            \begin{subfigure}[t]{0.3\textwidth}
                \centering
                \includegraphics[width=\textwidth]{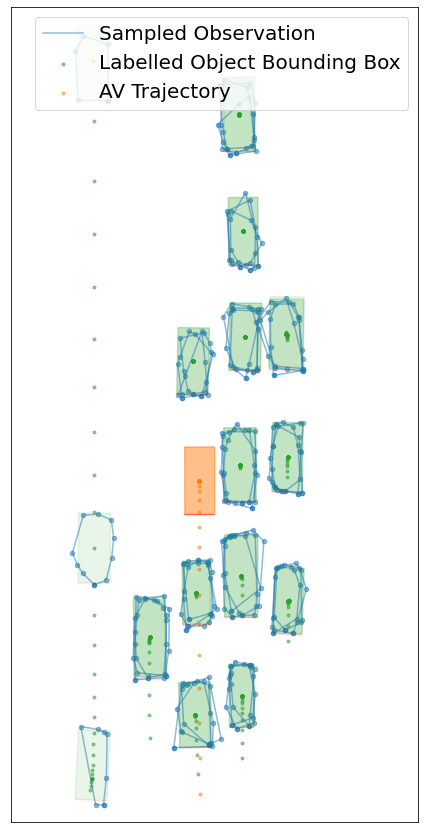}
            \end{subfigure}
            \hfill
            \begin{subfigure}[t]{0.69\textwidth}
                \centering
                \includegraphics[width=\textwidth]{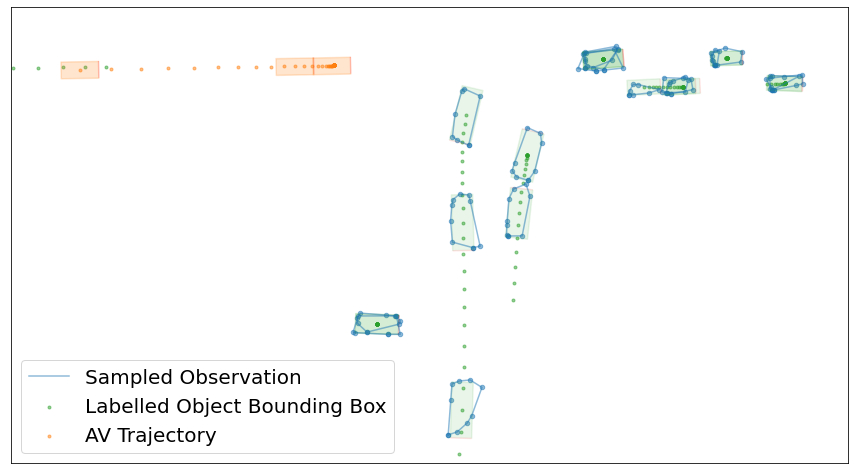}
            \end{subfigure}
            \hfill
            \caption{Sampled observations on the same labelled states from the real data that were shown in Figure \ref{fig:real-rollout}. Notice the qualitative similarity in the observations in this figure relative to Figure \ref{fig:real-rollout}.}
            \label{fig:sampled-observations}
        \end{figure}
        
        We ran hyper-parameter sweeps over learning rates, the smoothing lag ($L$) for PF-SEFI, the trade-off parameter ($\alpha$) for PFNET, and the number of training particles. The final results are using the best settings of these parameters which we list in Table \ref{tab:real-hyper-params}, though we note that results were quite insensitive to most of these settings. Figure \ref{fig:real-data-lag-particles-tradeoff} shows the effect of $L$ on the maximum achieved test MLL on real data for different numbers of training particles. As can be seen, PF-SEFI does significantly better as $L$ increases from 0 to 10, highlighting the added benefit of smoothing. On the other hand, it is quite insensitive to the smoothing lag $L$ beyond this point, except that the variance appears to increase when $L$ gets too large. We also find that PF-SEFI consistently improves with more particles.
        
        When training on real data, all methods were subject to very large gradients at times, especially earlier on when failing to track objects is much more likely, leading to very high losses and corresponding gradients. In order to stabilise training, we clipped the maximum global norm of all gradients to 0.5. This is particularly necessary for the methods that require differentiating through the filter.
        \begin{table}[]
            \centering
            \caption{Hyper-parameters used for experiment C (real data with 60 step trajectories). Smoothing lag ($L$) is only relevant for PF-SEFI, and the trade-off parameter ($\alpha$) is only relevant for PFNET.}
            \label{tab:real-hyper-params}
            \begin{tabular}{||l|c||}
                \toprule
                Hyper-Parameter                             & Value\\
                \midrule
                Learning Rate                               & 0.01\\
                Global Grad Norm Clipping                   & 0.5\\
                Number of Epochs                            & 15\\
                Smoothing Lag ($L$) for PF-SEFI             & 15\\
                Trade-off Parameter ($\alpha$) for PFNET    & 0.8\\
                Number of Particles for Training            & 4096\\
                Number of Particles for Evaluation          & 4096\\
                \bottomrule
            \end{tabular}
        \end{table}

\section{Sampled Observations from Learned Observation Model}
    \label{app:generated-observations}
    Figure \ref{fig:sampled-observations} shows sampled observations from the observation model learned on real data using PF-SEFI. These observations were sampled using the checkpoint that produced the highest MLL, and for the same (labelled) states that were shown in Figure \ref{fig:real-rollout}. The qualitative similarity between the real and sampled observations indicates the efficacy of our method for learning generative models that can be used to sample observations in closed-loop simulation.
    
\section{Training on Shorter Sequences}
    \label{app:short-sequences}
    
    \begin{figure}
        \centering
        \begin{subfigure}[t]{0.49\textwidth}
            \centering
            \includegraphics[width=\textwidth]{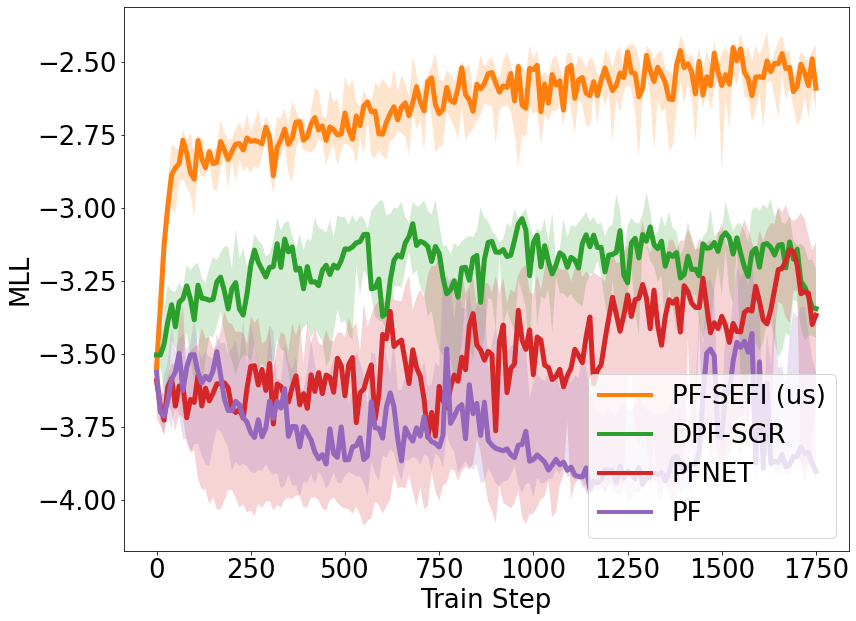}
            \caption{MLL of test real data after training models using 30 step sequences.}
            \label{fig:short-seq-training:30_steps}
        \end{subfigure}
        \hfill
        \begin{subfigure}[t]{0.49\textwidth}
            \centering
            \includegraphics[width=\textwidth]{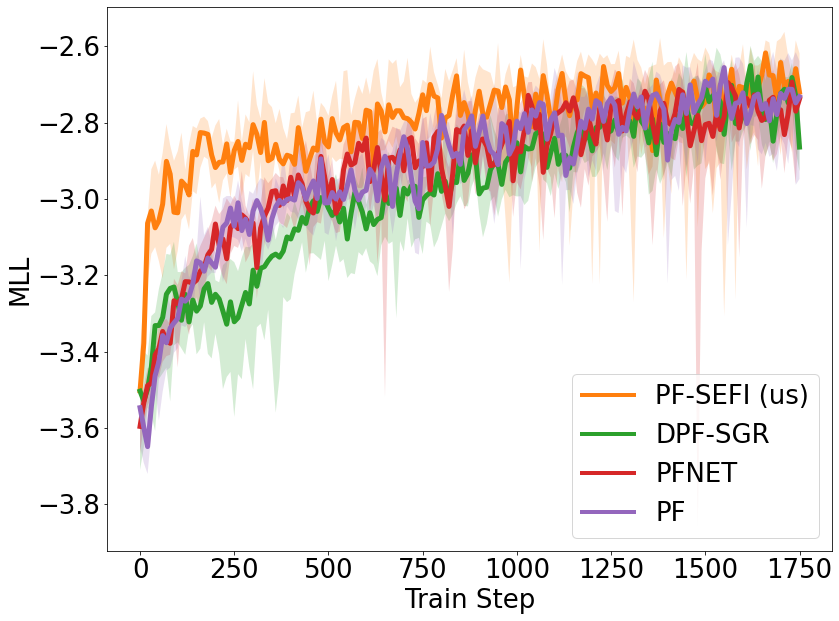}
            \caption{MLL of test real data after training models using 15 step sequences.}
            \label{fig:short-seq-training:15_steps}
        \end{subfigure}
        \hfill
        \caption{Marginal Log Likelihood (MLL) of real test data for models trained on shorter sequences plotted against the corresponding training steps.}
        \label{fig:short-seq-training}
    \end{figure}
    
    In the case of synthetic data, we note that some methods such as DPF-SGR performed poorly when trained on 50 step sequences (Figure \ref{fig:mll:synthetic_50_mlls}), however performed much better when trained on shorter 25 step sequences (Figure \ref{fig:mll:synthetic_25_mlls}). We conducted similar experiments on real data to see if a similar improvement can be attained. Figure \ref{fig:short-seq-training} shows the results when training DPF-SGR and other baselines on 30 and 15 step sequences of real data (instead of 60). At 30 steps (Figure \ref{fig:short-seq-training:30_steps}), we find that all 3 baselines still fail to learn good models, while PF-SEFI performs almost as well as on length 60 sequences. At 15 steps (Figure \ref{fig:short-seq-training:15_steps}), however, the baselines do actually perform much better, though the final performance for all methods is worse than PF-SEFI on 60 steps.

    This highlights an advantage of PF-SEFI, which is that it is relatively invariant to the length of sequences that it is trained on. Depending on the problem setting, there is usually a minimum sequence length required to obtain enough information to learn the correct models. If that sequence length is longer than the maximum sequence length for which a method such as DPF-SGR is stable to train, then one must sacrifice model quality for stable learning by cutting the sequences to shorter subsequences or by subsampling the sequences, throwing away some of the observations. In this case, shortening the sequences to 15 steps allowed for reasonable (though suboptimal) models to be learned using the baseline methods. In other problems it may well be the case that even more trimming and/or subsampling would be needed.

\section{Performance on the Filtering Task}
    \label{app-filtering-task}
    
    In Section \ref{sec:experiments}, we considered metrics such as ADE and AYE that pertain to the task of state estimation in the offline setting (known as smoothing), i.e., where we assume access to the entire sequence of observations, i.e. $y_{1:T}$. In this section, we additionally consider the online setting (known as filtering) where the task is state estimation at each time step $t$, with observations from time step 0 up to time step $t$, i.e. $p(x_t | y_{0:t})$. This setting is relevant for the use of our learned models on-board the AV. 
    
    Table \ref{tab:filtering-metrics} reports ADE and AYE using both the smoothing and filtering distribution for the offline and online task of state estimation respectively. The patterns are unchanged relative to the ones observed in Section \ref{sec:experiments} and in Table \ref{tab:metrics}. However, these results highlight the applicability of the learned in both settings.
    
    \begin{table}
    \scriptsize
    \centering
    \caption{Metrics computed on held out synthetic test data comparing PF-SEFI (us) against baselines DPF-SGR, PFNET, and vanilla PF, on two experiments - (A) learning from synthetic data with 25 steps, and (B) learning from synthetic data with 50 steps. The Smoothing ADE and AYE reported here are the same as in Table \ref{tab:metrics}. They are contrasted with the Filtering ADE and AYE, which measure the performance of the learned models in the online setting of state estimation.}
    \label{tab:filtering-metrics}
    \begin{tabular}{||c||l||c|c||c|c||}
        \toprule
        \textbf{Exp.} & \textbf{Method} & \textbf{Smoothing ADE (m)} & \textbf{Filtering ADE (m)} & \textbf{Smoothing AYE (rad)} & \textbf{Filtering AYE (rad)} \\
        \midrule
        \multirow{5}{*}{A}  & TRUE          & 0.090 $\pm$ 0.001 & 0.144 $\pm$ 0.001 & 0.014 $\pm$ 0.000 & 0.034 $\pm$ 0.000\\
                            & PF-SEFI (us)  & 0.186 $\pm$ 0.021 & 0.233 $\pm$ 0.009 & 0.016 $\pm$ 0.000 & 0.039 $\pm$ 0.000\\
                            & DPF-SGR       & \textbf{0.165 $\pm$ 0.008} & \textbf{0.222 $\pm$ 0.008} & \textbf{0.014 $\pm$ 0.000} & \textbf{0.035 $\pm$ 0.001}\\
                            & PFNET         & 0.264 $\pm$ 0.019 & 0.333 $\pm$ 0.030 & 0.017 $\pm$ 0.000 & 0.047 $\pm$ 0.001\\
                            & PF            & 0.245 $\pm$ 0.021 & 0.345 $\pm$ 0.024 & 0.017 $\pm$ 0.000 & 0.047 $\pm$ 0.001\\
        \midrule
        \multirow{5}{*}{B}  & TRUE          & 0.088 $\pm$ 0.001 & 0.154 $\pm$ 0.001 & 0.012 $\pm$ 0.000 & 0.038 $\pm$ 0.000 \\
                            & PF-SEFI (us)  & \textbf{0.165 $\pm$ 0.013} & \textbf{0.239 $\pm$ 0.007} & \textbf{0.014 $\pm$ 0.000} & \textbf{0.043 $\pm$ 0.000} \\
                            & DPF-SGR       & 2.828 $\pm$ 0.415 & 2.888 $\pm$ 0.232 & 0.142 $\pm$ 0.016 & 0.189 $\pm$ 0.008\\
                            & PFNET         & 2.809 $\pm$ 0.176 & 3.133 $\pm$ 0.108 & 0.148 $\pm$ 0.008 & 0.213 $\pm$ 0.009\\
                            & PF            & 2.502 $\pm$ 0.042 & 3.207 $\pm$ 0.106 & 0.137 $\pm$ 0.007 & 0.212 $\pm$ 0.008\\
        \bottomrule
    \end{tabular}
    \end{table}

\section{Training with Higher Dimensional Observations}
    \label{app:higher-dim-observations}
    
    In Section \ref{sec:experiments}, we reported experiments with 32 dimensional observations (16 2D points). In this section, we report additional experiments with even higher dimensional observations (32 and 64 2D points, i.e., 64 and 128 dimensional observations respectively) on the synthetic dataset with 25 steps, trained using PF-SEFI. In Table \ref{tab:higher-dim-observations} we summarise the empirical findings of these experiments. In each case, the learned models match the performance of the true models as measured by MLL. These results suggest that PF-SEFI scales well with higher dimensional observations.
    
    \begin{table}
    \scriptsize
    \centering
    \caption{MLL computed on held out synthetic test data comparing PF-SEFI (us) against the true models on synthetic dataset generated with 32 (A), 64 (D), and 128 (E) dimensional observations.}
    \label{tab:higher-dim-observations}
    \begin{tabular}{||c||l||c||}
        \toprule
        \textbf{Exp.} & \textbf{Method} & \textbf{MLL} \\
        \midrule
        \multirow{2}{*}{A}  & TRUE          & -3.161 $\pm$ 0.003\\
                            & PF-SEFI (us)  & -3.152 $\pm$ 0.006\\
        \midrule
        \multirow{2}{*}{D}  & TRUE          & -3.171 $\pm$ 0.003\\
                            & PF-SEFI (us)  & -3.163 $\pm$ 0.007\\
        \midrule
        \multirow{2}{*}{E}  & TRUE          & -3.188 $\pm$ 0.003\\
                            & PF-SEFI (us)  & -3.177 $\pm$ 0.006\\
        \bottomrule
    \end{tabular}
    \end{table}

\section{Effect of a Noisy AV State on Learning}
    \label{app:noisy-av-state}
    In Section \ref{sec:experiments}, we assume that the AV state is known precisely. In practice, we expect there to be some minimal errors in state estimation. To test our sensitivity to the same, we ran additional experiments with the 25 steps synthetic dataset by injecting Gaussian noise (with a standard deviation of 0.5m in $x$ and $y$, and 0.05rad in $\theta$) in the AV's 2D pose. We retrained our models using PF-SEFI in the presence of such noise, and observe no change in MLL at convergence over the held out test data (see Figure \ref{fig:noisy-av-pose}), nor in training stability.
    
    \begin{figure}
        \centering
        \includegraphics[width=0.6\textwidth]{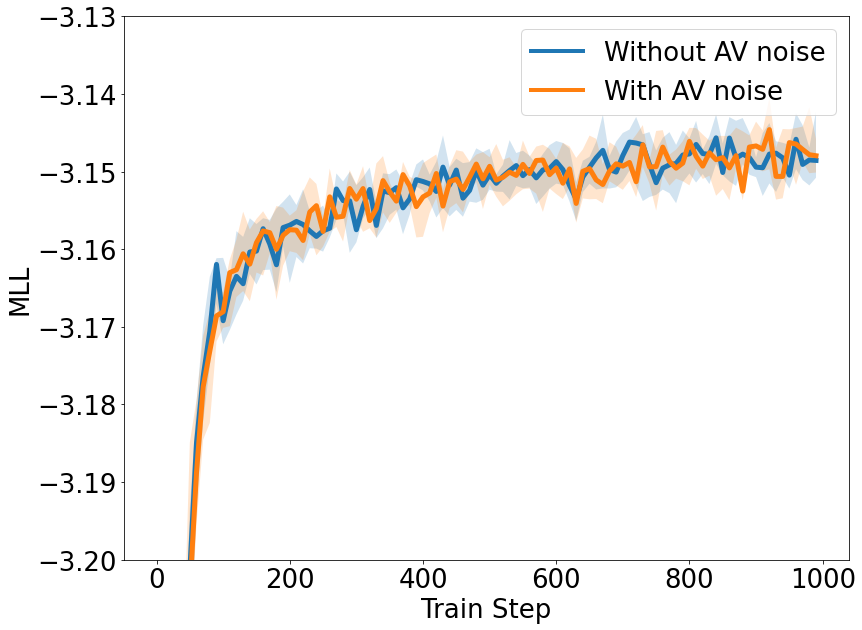}
        \caption{Marginal Log Likelihood (MLL) of synthetic test data using models trained on synthetic data with noisy AV states.}
        \label{fig:noisy-av-pose}
    \end{figure}


\begin{thebibliography}{32}
\providecommand{\natexlab}[1]{#1}
\providecommand{\url}[1]{\texttt{#1}}
\expandafter\ifx\csname urlstyle\endcsname\relax
  \providecommand{\doi}[1]{doi: #1}\else
  \providecommand{\doi}{doi: \begingroup \urlstyle{rm}\Url}\fi

\bibitem[Thrun et~al.(2005)Thrun, Burgard, and Fox]{thrun2005probabilistic}
S.~Thrun, W.~Burgard, and D.~Fox.
\newblock \emph{Probabilistic Robotics}.
\newblock MIT Press, 2005.

\bibitem[Le et~al.(2018)Le, Igl, Rainforth, Jin, and Wood]{anh2018autoencoding}
T.~A. Le, M.~Igl, T.~Rainforth, T.~Jin, and F.~Wood.
\newblock Auto-encoding sequential {M}onte {C}arlo.
\newblock In \emph{International Conference on Learning Representations}, 2018.

\bibitem[Maddison et~al.(2017)Maddison, Lawson, Tucker, Heess, Norouzi, Mnih,
  Doucet, and Teh]{maddison2017filtering}
C.~J. Maddison, J.~Lawson, G.~Tucker, N.~Heess, M.~Norouzi, A.~Mnih, A.~Doucet,
  and Y.~Teh.
\newblock Filtering variational objectives.
\newblock \emph{Advances in Neural Information Processing Systems}, 30, 2017.

\bibitem[Naesseth et~al.(2018)Naesseth, Linderman, Ranganath, and
  Blei]{naesseth2018variational}
C.~Naesseth, S.~Linderman, R.~Ranganath, and D.~Blei.
\newblock Variational sequential {M}onte {C}arlo.
\newblock In \emph{International Conference on Artificial Intelligence and
  Statistics}, pages 968--977, 2018.

\bibitem[Karkus et~al.(2018)Karkus, Hsu, and Lee]{karkus2018particle}
P.~Karkus, D.~Hsu, and W.~S. Lee.
\newblock Particle filter networks with application to visual localization.
\newblock In \emph{Conference on Robot Learning}, pages 169--178, 2018.

\bibitem[Corenflos et~al.(2021)Corenflos, Thornton, Deligiannidis, and
  Doucet]{corenflos2021differentiable}
A.~Corenflos, J.~Thornton, G.~Deligiannidis, and A.~Doucet.
\newblock Differentiable particle filtering via entropy-regularized optimal
  transport.
\newblock In \emph{International Conference on Machine Learning}, pages
  2100--2111, 2021.

\bibitem[Lai et~al.(2022)Lai, Domke, and Sheldon]{lai2022variational}
J.~Lai, J.~Domke, and D.~Sheldon.
\newblock Variational marginal particle filters.
\newblock In \emph{International Conference on Artificial Intelligence and
  Statistics}, pages 875--895, 2022.

\bibitem[Poyiadjis et~al.(2011)Poyiadjis, Doucet, and
  Singh]{poyiadjis2011particle}
G.~Poyiadjis, A.~Doucet, and S.~S. Singh.
\newblock Particle approximations of the score and observed information matrix
  in state space models with application to parameter estimation.
\newblock \emph{Biometrika}, 98\penalty0 (1):\penalty0 65--80, 2011.

\bibitem[Kantas et~al.(2015)Kantas, Doucet, Singh, Maciejowski, and
  Chopin]{kantas2015particle}
N.~Kantas, A.~Doucet, S.~S. Singh, J.~Maciejowski, and N.~Chopin.
\newblock On particle methods for parameter estimation in state-space models.
\newblock \emph{Statistical science}, 30\penalty0 (3):\penalty0 328--351, 2015.

\bibitem[Abadi et~al.(2015)Abadi, Agarwal, Barham, Brevdo, Chen, Citro,
  Corrado, Davis, Dean, Devin, Ghemawat, Goodfellow, Harp, Irving, Isard, Jia,
  Jozefowicz, Kaiser, Kudlur, Levenberg, Man\'{e}, Monga, Moore, Murray, Olah,
  Schuster, Shlens, Steiner, Sutskever, Talwar, Tucker, Vanhoucke, Vasudevan,
  Vi\'{e}gas, Vinyals, Warden, Wattenberg, Wicke, Yu, and
  Zheng]{tensorflow2015-whitepaper}
M.~Abadi, A.~Agarwal, P.~Barham, E.~Brevdo, Z.~Chen, C.~Citro, G.~S. Corrado,
  A.~Davis, J.~Dean, M.~Devin, S.~Ghemawat, I.~Goodfellow, A.~Harp, G.~Irving,
  M.~Isard, Y.~Jia, R.~Jozefowicz, L.~Kaiser, M.~Kudlur, J.~Levenberg,
  D.~Man\'{e}, R.~Monga, S.~Moore, D.~Murray, C.~Olah, M.~Schuster, J.~Shlens,
  B.~Steiner, I.~Sutskever, K.~Talwar, P.~Tucker, V.~Vanhoucke, V.~Vasudevan,
  F.~Vi\'{e}gas, O.~Vinyals, P.~Warden, M.~Wattenberg, M.~Wicke, Y.~Yu, and
  X.~Zheng.
\newblock {TensorFlow}: Large-scale machine learning on heterogeneous systems,
  2015.
\newblock URL \url{http://tensorflow.org/}.
\newblock Software available from tensorflow.org.

\bibitem[Paszke et~al.(2019)Paszke, Gross, Massa, Lerer, Bradbury, Chanan,
  Killeen, Lin, Gimelshein, Antiga, Desmaison, Kopf, Yang, DeVito, Raison,
  Tejani, Chilamkurthy, Steiner, Fang, Bai, and Chintala]{pytorch_neurips2019}
A.~Paszke, S.~Gross, F.~Massa, A.~Lerer, J.~Bradbury, G.~Chanan, T.~Killeen,
  Z.~Lin, N.~Gimelshein, L.~Antiga, A.~Desmaison, A.~Kopf, E.~Yang, Z.~DeVito,
  M.~Raison, A.~Tejani, S.~Chilamkurthy, B.~Steiner, L.~Fang, J.~Bai, and
  S.~Chintala.
\newblock Pytorch: An imperative style, high-performance deep learning library.
\newblock In \emph{Advances in Neural Information Processing Systems},
  volume~32. Curran Associates, Inc., 2019.
\newblock URL
  \url{https://proceedings.neurips.cc/paper/2019/file/bdbca288fee7f92f2bfa9f7012727740-Paper.pdf}.

\bibitem[Doucet and Johansen(2009)]{doucet2009tutorial}
A.~Doucet and A.~M. Johansen.
\newblock A tutorial on particle filtering and smoothing: Fifteen years later.
\newblock \emph{Handbook of Nonlinear Filtering}, 12\penalty0
  (656-704):\penalty0 3, 2009.

\bibitem[Gordon et~al.(1993)Gordon, Salmond, and Smith]{gordon1993novel}
N.~J. Gordon, D.~J. Salmond, and A.~F. Smith.
\newblock Novel approach to nonlinear/non-{G}aussian {B}ayesian state
  estimation.
\newblock \emph{IEE Proceedings F (Radar and Signal Processing)}, 140\penalty0
  (2):\penalty0 107--113, 1993.

\bibitem[Pitt and Shephard(1999)]{pitt1999filtering}
M.~K. Pitt and N.~Shephard.
\newblock Filtering via simulation: Auxiliary particle filters.
\newblock \emph{Journal of the American Statistical Association}, 94\penalty0
  (446):\penalty0 590--599, 1999.

\bibitem[Ionides et~al.(2006)Ionides, Bret{\'o}, and
  King]{ionides2006inference}
E.~L. Ionides, C.~Bret{\'o}, and A.~A. King.
\newblock Inference for nonlinear dynamical systems.
\newblock \emph{Proceedings of the National Academy of Sciences}, 103\penalty0
  (49):\penalty0 18438--18443, 2006.

\bibitem[Olsson and Westerborn(2017)]{olsson2017efficient}
J.~Olsson and J.~Westerborn.
\newblock Efficient particle-based online smoothing in general hidden markov
  models: the {PaRIS} algorithm.
\newblock \emph{Bernoulli}, 23\penalty0 (3):\penalty0 1951--1996, 2017.

\bibitem[{\'S}cibior and Wood(2021)]{scibior2021differentiable}
A.~{\'S}cibior and F.~Wood.
\newblock Differentiable particle filtering without modifying the forward pass.
\newblock \emph{arXiv preprint arXiv:2106.10314}, 2021.

\bibitem[Kitagawa and Sato(2001)]{kitagawa2001monte}
G.~Kitagawa and S.~Sato.
\newblock Monte {C}arlo smoothing and self-organising state-space model.
\newblock In A.~Doucet, N.~De~Freitas, and N.~Gordon, editors, \emph{Sequential
  Monte Carlo Methods in Practice}, pages 177--195. Springer, 2001.

\bibitem[Olsson et~al.(2008)Olsson, Capp{\'e}, Douc, and
  Moulines]{olsson2008sequential}
J.~Olsson, O.~Capp{\'e}, R.~Douc, and E.~Moulines.
\newblock Sequential {M}onte {C}arlo smoothing with application to parameter
  estimation in nonlinear state space models.
\newblock \emph{Bernoulli}, 14\penalty0 (1):\penalty0 155--179, 2008.

\bibitem[Ma et~al.(2020{\natexlab{a}})Ma, Karkus, Hsu, and Lee]{ma2019particle}
X.~Ma, P.~Karkus, D.~Hsu, and W.~S. Lee.
\newblock Particle filter recurrent neural networks.
\newblock In \emph{AAAI Conference on Artificial Intelligence},
  2020{\natexlab{a}}.

\bibitem[Ma et~al.(2020{\natexlab{b}})Ma, Karkus, Hsu, Lee, and
  Ye]{ma2020discriminative}
X.~Ma, P.~Karkus, D.~Hsu, W.~S. Lee, and N.~Ye.
\newblock Discriminative particle filter reinforcement learning for complex
  partial observations.
\newblock In \emph{International Conference on Learning Representations},
  2020{\natexlab{b}}.

\bibitem[Jonschkowski et~al.(2018)Jonschkowski, Rastogi, and
  Brock]{jonschkowski2018differentiable}
R.~Jonschkowski, D.~Rastogi, and O.~Brock.
\newblock Differentiable particle filters: End-to-end learning with algorithmic
  priors.
\newblock In \emph{Proceedings of Robotics: Science and Systems}, 2018.

\bibitem[Zhu et~al.(2020)Zhu, Murphy, and Jonschkowski]{zhu2020towards}
M.~Zhu, K.~Murphy, and R.~Jonschkowski.
\newblock Towards differentiable resampling.
\newblock \emph{arXiv preprint arXiv:2004.11938}, 2020.

\bibitem[Kloss et~al.(2021)Kloss, Martius, and Bohg]{kloss2020train}
A.~Kloss, G.~Martius, and J.~Bohg.
\newblock How to train your differentiable filter.
\newblock \emph{Autonomous Robots}, 45\penalty0 (4):\penalty0 561--578, 2021.

\bibitem[Granstrom et~al.(2016)Granstrom, Baum, and
  Reuter]{granstrom2016extended}
K.~Granstrom, M.~Baum, and S.~Reuter.
\newblock Extended object tracking: Introduction, overview and applications.
\newblock \emph{arXiv preprint arXiv:1604.00970}, 2016.

\bibitem[Blackman(1999)]{blackman_design_1999}
R.~S.~S. Blackman, Popoli.
\newblock \emph{Design and Analysis of Modern Tracking Systems}.
\newblock Artech House, Boston, 1999.

\bibitem[Granstr{\"o}m et~al.(2014)Granstr{\"o}m, Reuter, Meissner, and
  Scheel]{granstrom2014multiple}
K.~Granstr{\"o}m, S.~Reuter, D.~Meissner, and A.~Scheel.
\newblock A multiple model {PHD} approach to tracking of cars under an assumed
  rectangular shape.
\newblock In \emph{17th International Conference on Information Fusion
  (FUSION)}. IEEE, 2014.

\bibitem[Del~Moral(2004)]{delmoral2004}
P.~Del~Moral.
\newblock \emph{Feynman-Kac Formulae}.
\newblock Springer, 2004.

\bibitem[Lindsten et~al.(2015)Lindsten, Bunch, Singh, and
  Sch{\"o}n]{lindsten2015particle}
F.~Lindsten, P.~Bunch, S.~S. Singh, and T.~B. Sch{\"o}n.
\newblock Particle ancestor sampling for near-degenerate or intractable state
  transition models.
\newblock \emph{arXiv preprint arXiv:1505.06356}, 2015.

\bibitem[Krantz and Parks(2008)]{krantz2008geometric}
S.~G. Krantz and H.~R. Parks.
\newblock \emph{Geometric Integration Theory}.
\newblock Springer Science \& Business Media, 2008.

\bibitem[Caterini et~al.(2021)Caterini, Loaiza-Ganem, Pleiss, and
  Cunningham]{caterini2021rectangular}
A.~L. Caterini, G.~Loaiza-Ganem, G.~Pleiss, and J.~P. Cunningham.
\newblock Rectangular flows for manifold learning.
\newblock \emph{Advances in Neural Information Processing Systems}, 34, 2021.

\bibitem[Lynch and Park(2017)]{lynch2017modern}
K.~M. Lynch and F.~C. Park.
\newblock \emph{Modern Robotics}.
\newblock Cambridge University Press, 2017.

\end{thebibliography}
\end{document}